\documentclass{article}

\usepackage[main, final]{neurips_2026}

\usepackage[utf8]{inputenc} 
\usepackage[T1]{fontenc}    
\usepackage{hyperref}       
\usepackage{url}            
\usepackage{booktabs}       
\usepackage{amsfonts}       
\usepackage{nicefrac}       
\usepackage{microtype}      
\usepackage{xcolor}         
\usepackage{graphicx}
\usepackage{wrapfig}
\usepackage{adjustbox}
\usepackage{multirow}
\usepackage{amsmath}
\usepackage{amsthm}
\usepackage{amsfonts}
\usepackage{amsmath}
\usepackage{algorithm}
\usepackage{algorithmic}
\usepackage{amsfonts}
\usepackage{amssymb}
\usepackage{bbm}
\newtheorem{theorem}{Theorem}
\newtheorem{lemma}{Lemma}

\newtheorem{proposition}{Proposition}
\newtheorem{corollary}{Corollary}

\newcommand{\rqref}[1]{\textbf{#1}}
\title{Rethinking Structured Generation: Can Graph-Based Reasoning Resolve Ambiguity?}

\author{
  Ratun Rahman \qquad Atit Pokharel\\
  Department of Electrical and Computer Engineering\\
  The University of Alabama in Huntsville\\
  Huntsville, AL 35816\\
  \texttt{\{rr0110, ap1284\}@uah.edu} 
}

\begin{document}

\maketitle

\begin{abstract}
Structured generation is typically formulated as a deterministic mapping from input to a single structured output. This assumption is fundamentally misaligned with natural language, which often admits multiple structurally distinct yet semantically valid interpretations. As a result, existing methods rely on single-reference supervision that collapses ambiguity into one prediction, leading to unstable and inconsistent outputs under underspecified inputs. We reformulate structured generation as an \textit{ambiguity-aware inference problem}, where the objective is to reason over a set of plausible structured outputs rather than predict a single solution. We show that standard training inherently suppresses valid alternatives, and instead adopt a multi-valid learning framework that preserves ambiguity during optimization. To operationalize this formulation, we construct a candidate space of structured outputs, represent each candidate as a graph capturing entities and their relationships, and perform selection via graph-based reasoning over semantic alignment and relational consistency. This enables principled comparison across competing interpretations beyond token-level likelihoods. Experiments on structured generation benchmarks demonstrate consistent improvements in accuracy, stability, and robustness under varying levels of ambiguity. These results highlight the importance of modeling ambiguity explicitly and suggest a shift from single-output prediction to set-level reasoning in structured generation.
\end{abstract}

\section{Introduction}
Structured generation, or mapping natural language into formal representations such as tables, charts, or executable programs, has gained increasing importance with the rise of large language models \cite{yang2019conditional, wu2024protein, xiang2025structured}. This problem is typically formulated by existing methods as a deterministic mapping from input to a single structured output \cite{o2025deterministic, gianniotis2008visualization}. However, this formulation is fundamentally misaligned with the nature of natural language, which is often ambiguous and underspecified, admitting multiple structurally distinct yet semantically valid interpretations \cite{hu2022make, keyvan2022approach}. For example, the sentence "Sales increased in summer and declined in winter" can be represented at different temporal granularities, each yielding a valid structured representation. Despite this, existing approaches rely on single-reference supervision and likelihood-based objectives that collapse multiple plausible interpretations into a single prediction. When ambiguity is present, this mismatch leads to unstable behavior, sensitivity to decoding strategies, and inconsistent outputs. Crucially, the limitation is not merely due to model capacity, but arises from the underlying learning formulation itself: treating structured generation as a single-output prediction problem inherently suppresses valid alternatives and fails to capture the diversity of correct solutions \cite{liu2023ambiguity, di2025design, keluskar2024llms}.

\begin{figure}
    \centering
    \includegraphics[width=0.99\linewidth]{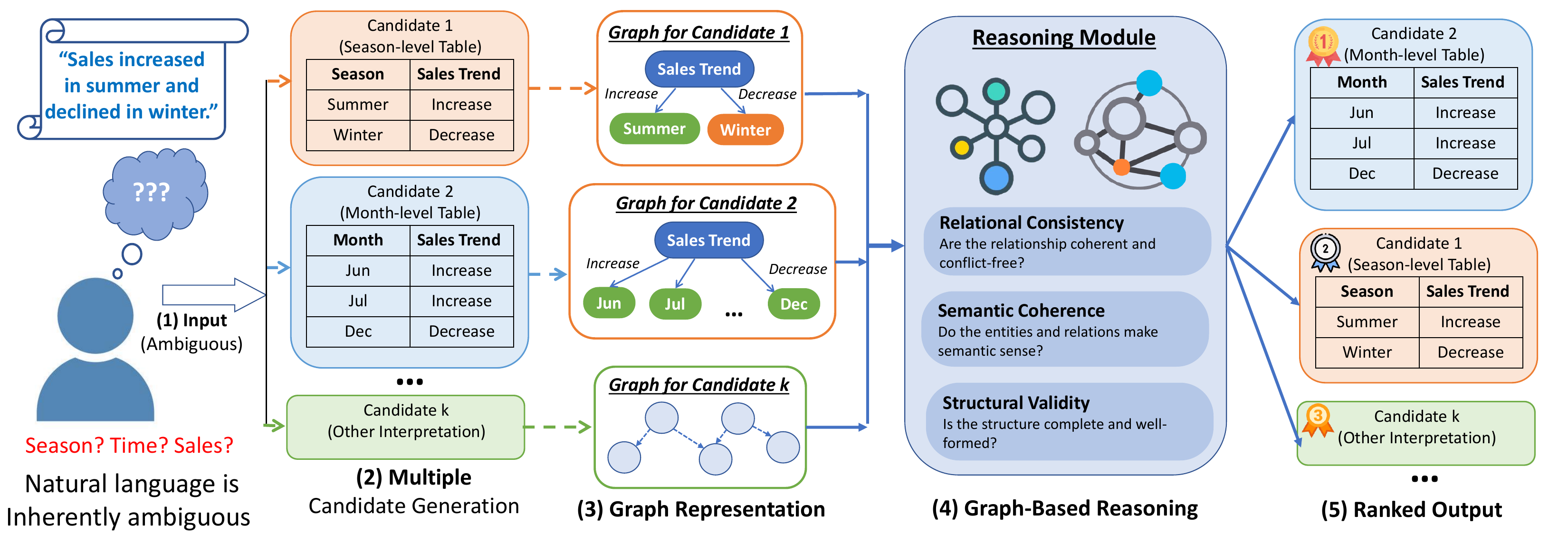}
    \vspace{-0.5em}
\caption{\footnotesize
Overview of the proposed ambiguity-aware structured generation framework.
Given an ambiguous natural-language input, the model first generates multiple
candidate structured representations capturing different plausible
interpretations. Each candidate is then converted into a graph encoding
task-specific entities and their relationships. A graph-based reasoning module
evaluates and compares the candidates based on semantic coherence, structural
validity, and relational consistency, enabling principled selection beyond
token-level likelihoods. Finally, the candidates are ranked and the
highest-scoring structure is selected as the output.}
    \label{fig:overview}
    \vspace{-1em}
\end{figure}

Recent advances in sequence modeling and reasoning have significantly improved structured generation \cite{huguet2024sequence, zheng2024sglang}. Performance on tasks such as text-to-SQL \cite{hong2025next, wang2020rat, lin2020bridging, scholak2021picard, li2023resdsql, pourreza2023din}, table generation \cite{xiang2025structured}, and visualization synthesis \cite{wu2023imagine} has benefited from large language models (LLMs), improved prompting strategies such as chain-of-thought and program-of-thought reasoning, and retrieval-augmented techniques. Concurrently, graph-based methods \cite{cheng2022crystal, li2022sgtr, martinkus2022spectre, wang2025fdgen} model entities and their relationships, enabling more effective reasoning within individual structures. 
Despite these advances, both lines of work share a common limitation: they retain a single-output assumption, treating structured generation as selecting one correct structure per input \cite{zhu2026graph}. LLM-based methods improve surface-level reasoning but still rely on single-reference supervision, while graph-based approaches enhance structural modeling but operate within individual candidates. Neither line of work explicitly models the space of alternative valid interpretations as a first-class component of structured generation. As a result, existing methods lack mechanisms to compare and resolve ambiguity across competing structures \cite{liu2026lidar, szymanski2025limitations}. This limitation is particularly pronounced in underspecified scenarios, where multiple interpretations are equally plausible. Addressing this challenge requires moving beyond deterministic prediction toward frameworks that reason over sets of candidate structures in a principled manner.

In this paper, we reformulate structured generation as an
\textit{ambiguity-aware inference problem}, moving beyond single-output
prediction to reasoning over a set of candidate structured representations
that capture multiple plausible interpretations. We adopt a multi-valid
learning perspective that preserves multiple correct candidates during
training, addressing the limitations of single-reference supervision.
Figure~\ref{fig:overview} illustrates the overall framework: given an
ambiguous input, the model first generates multiple candidate structures,
maps each candidate to a task-specific relational graph, applies graph-based
reasoning to evaluate semantic coherence, structural validity, and relational
consistency, and finally ranks the candidates to select the output.
Importantly, graph construction is task-specific at the schema level, whereas
the subsequent graph encoding, scoring, and ambiguity-aware selection
pipeline follow the same formulation across tasks. This formulation departs
from conventional reranking pipelines by treating ambiguity as a first-class
modeling objective rather than a post-hoc selection problem. The primary
contributions of this paper are summarized as follows.

\begin{itemize}
\item We identify a fundamental limitation of existing approaches, which treat structured generation as a one-to-one deterministic mapping. We show that this formulation is inherently misaligned with natural language, where multiple structured interpretations may be equally valid, and formalize the problem as an \textit{ambiguity-aware inference task}.

\item We introduce an ambiguity-aware framework that explicitly constructs a candidate space of structured representations and adopts a multi-valid learning perspective, preserving multiple correct interpretations during training and avoiding the suppression effects of single-reference supervision.

\item We represent each candidate as a graph encoding entities, attributes, and their relationships, and develop a graph-based reasoning mechanism that evaluates and ranks candidates based on relational consistency, structural validity, and semantic coherence, enabling principled comparison across competing interpretations beyond token-level likelihoods.

\item We demonstrate through extensive experiments that explicitly modeling
and resolving ambiguity leads to more accurate, stable, and robust predictions
across structured generation tasks, with particularly strong gains under high
ambiguity. Controlled ablations further disentangle the contributions of
graph-based reasoning and multi-valid supervision, showing that both
independently improve candidate selection.
\end{itemize}

\section{Methodology}
\label{sec:method}

\noindent{\textbf{\textit{Overview:}}}
We formulate structured generation as an ambiguity-aware inference problem. Given an input query $x$, the framework does not directly commit to a single output. Instead, it constructs a candidate set $\mathcal{S}(x)=\{s_1,\ldots,s_k\}$, where $k$ is the number of generated candidates, converts each candidate $s_i$ into a graph $G_{s_i}$, reasons over candidate graphs, and selects the most coherent structure. The overall pipeline is: input text $\rightarrow$ candidate structures $\rightarrow$ graph construction $\rightarrow$ graph-based reasoning $\rightarrow$ scoring and selection. This design allows the model to compare multiple plausible interpretations before committing to one structured output.

\noindent{\textbf{\textit{Problem Formulation:}}}
Let $x$ be a natural-language input and let $\mathcal{S}(x)=\{s_1,\ldots,s_k\}$ denote the set of plausible structured outputs generated for $x$. Unlike deterministic structured generation, which assumes a one-to-one mapping $x\mapsto s$, we allow $\mathcal{S}(x)$ to contain multiple semantically valid but structurally different candidates. The objective is to select
\begin{equation}
    s^*=\arg\max_{s\in\mathcal{S}(x)} \mathrm{Score}(s,x),
\end{equation}
where $\mathrm{Score}(s,x)$ measures both input alignment and internal structural coherence.

\noindent{\textbf{\textit{Candidate Structure Generation:}}}
For each input $x$, a pretrained language model or task-specific structured generator produces top-$k$ candidates $\mathcal{S}(x)=\{s_1,\ldots,s_k\}$. Candidate diversity is encouraged through stochastic decoding, beam search, or prompting variants. The purpose of this stage is not only to generate high-likelihood outputs, but also to expose alternative interpretations of an underspecified input. In text-to-visualization, candidates may differ in chart type, aggregation, encoding channel, filtering, or temporal grouping. In text-to-SQL, they may differ in join paths, selected columns, conditions, operators, aggregation functions, or equivalent query forms.

\noindent{\textbf{\textit{Graph Construction:}}}
Each candidate $s\in\mathcal{S}(x)$ is first mapped to a typed graph
$G_s=(V_s,E_s)$ using the structural schema of the target task.
Graph construction is task-specific at the schema level and precedes graph
encoding; the GNN does not generate the graphs. For text-to-visualization,
nodes include attributes, chart types, encoding channels, aggregation
functions, filters, grouping fields, and values, while edges encode relations
such as attribute-to-encoding, aggregation-to-attribute, and grouping
dependencies. For text-to-SQL, nodes include tables, columns, operators,
conditions, values, and aggregation functions, while edges encode relations
such as table-to-column, joins, condition dependencies, operator-to-argument,
and condition-to-value links.

This task specificity is confined to mapping each task's native structured
representation into a typed relational graph. Once $G_s$ is constructed, the
same downstream reasoning formulation---edge-aware message passing,
graph-level pooling, candidate scoring, and ambiguity-aware selection---is
applied. Thus, adapting the framework to a new structured-generation task
requires defining an appropriate node and relation schema, but does not
require redesigning the graph-reasoning mechanism or learning objective.
The resulting graph representation makes the internal organization of each
candidate explicit, allowing candidates that are similar at the token level
but differ in grouping, relation choice, or value attachment to be
distinguished structurally.

\noindent{\textbf{\textit{Graph-Based Reasoning:}}}
Given $G_s$, we compute node and graph representations using an edge-aware graph neural network. Let $h_v^{(0)}$ be the initial embedding of node $v\in V_s$, obtained from its node type and textual content when available. At layer $\ell$, node representations are updated as
\begin{equation}
    h_v^{(\ell+1)}
    =
    \sigma\left(
    W_1 h_v^{(\ell)}
    +
    W_2
    \sum_{u\in\mathcal{N}(v)}
    \psi(h_u^{(\ell)},e_{uv})
    \right),
\end{equation}
where $\mathcal{N}(v)$ is the neighborhood of $v$, $e_{uv}$ denotes the edge type between nodes $u$ and $v$, $\psi(\cdot)$ is an edge-aware message function, $\sigma(\cdot)$ is a nonlinear activation, and $W_1,W_2$ are trainable parameters. After $L$ layers, a graph-level representation is obtained as
\begin{equation}
    h_s=\mathrm{Readout}\left(\{h_v^{(L)}:v\in V_s\}\right),
\end{equation}
where $\mathrm{Readout}(\cdot)$ is a permutation-invariant pooling function. The initial node representation combines node-type information with a
contextual representation of the node's associated textual content, when
available, while edge-type information is incorporated through the
edge-aware message function $\psi(\cdot)$. In our implementation, we use
$L=2$ message-passing layers followed by mean pooling to obtain $h_s$.
The graph encoder and scoring module are optimized jointly using the
corresponding training objective described below. Because nvBench and Spider
use different graph schemas, the scorer is trained separately for each
benchmark; parameters are not shared across the two tasks. The GNN
architecture, scoring formulation, and learning objective, however, remain
the same across benchmarks.

\noindent{\textbf{\textit{Ambiguity-Aware Scoring:}}}
Each candidate is scored using its graph representation $h_s$ and the input representation $h_x$, where $h_x$ is obtained by encoding the input query $x$ with the same language encoder or a task-specific text encoder. We define $\mathrm{Score}(s,x)=f(h_s,h_x)$, where $f(\cdot)$ is a trainable scoring function. To separate the main sources of evidence, we use
\begin{equation}
    \mathrm{Score}(s,x)
    =
    \alpha S_{\mathrm{sem}}(s,x)
    +
    \beta S_{\mathrm{str}}(s)
    +
    \gamma S_{\mathrm{rel}}(s,x),
\end{equation}
where $\alpha,\beta,\gamma\geq 0$. The semantic score $S_{\mathrm{sem}}$ measures whether the candidate is supported by the input text. The structural score $S_{\mathrm{str}}$ measures whether the candidate graph is complete and well formed for the target formalism. The relational score $S_{\mathrm{rel}}$ measures whether relations implied by the input are consistent with relations encoded in $G_s$. In implementation, these terms are realized as learned scoring heads over $h_x$, $h_s$, and their interaction. This design compares candidates using semantic, structural, and relational evidence rather than token likelihood alone.

\noindent{\textbf{\textit{Selection and Ranking:}}}
At inference time, the final output is selected as $s^*=\arg\max_{s\in\mathcal{S}(x)}\mathrm{Score}(s,x)$. The same scoring function can also produce a ranked candidate list, which is useful when downstream systems require top-$k$ alternatives or when multiple plausible interpretations should be preserved.

\noindent{\textbf{\textit{Training Objective:}}}
When a single reference structure is available, the model is trained with a supervised ranking loss,
\begin{equation}
\label{eq:single}
    \mathcal{L}_{\mathrm{single}}
    =
    -\log
    \frac{
    \exp(\mathrm{Score}(s_{\mathrm{true}},x))
    }{
    \sum_{s\in\mathcal{S}(x)}
    \exp(\mathrm{Score}(s,x))
    }.
\end{equation}
Here, $s_{\mathrm{true}}$ denotes the reference structure.This objective also provides a single-reference instantiation of the graph reasoning framework when reliable alternative-validity information is not available. For ambiguous inputs, multiple candidates may be valid. Let $\mathcal{S}_{\mathrm{valid}}(x)\subseteq\mathcal{S}(x)$ denote the subset of candidates that are semantically valid for $x$. Depending on the benchmark, this subset can be obtained from annotated alternatives, execution-equivalence checks, visualization-equivalence checks, or human verification of generated candidates. We then optimize
\begin{equation} \label{eq:multivalid}
    \mathcal{L}_{\mathrm{multi}}
    =
    -\log
    \sum_{s\in\mathcal{S}_{\mathrm{valid}}(x)}
    \frac{
    \exp(\mathrm{Score}(s,x))
    }{
    \sum_{\tilde{s}\in\mathcal{S}(x)}
    \exp(\mathrm{Score}(\tilde{s},x))
    }.
\end{equation}
This objective assigns probability mass to all valid interpretations instead of forcing the model toward a single reference. It therefore preserves ambiguity during training while still penalizing unsupported or structurally inconsistent candidates.

\noindent{\textbf{\textit{Discussion:}}}
The proposed method differs from deterministic structured generation in two
ways. First, it explicitly represents ambiguity by constructing a candidate
set instead of committing to a single output early. Second, it evaluates
candidates through graph-based relational reasoning rather than relying only
on sequence likelihood or surface similarity. The graph-reasoning component
can be used with either single-reference or multi-valid supervision, whereas
the additional benefit of multi-valid training requires reliable identification
of valid alternatives.

The framework is applicable when structured outputs admit an explicit or
reliably recoverable relational representation. For example, program
representations may be expressed using abstract syntax, control-flow, or
data-flow relations, while knowledge-graph outputs directly expose entities
and relations. In contrast, the present framework is not directly applicable
to unconstrained free-form generation when no stable relational decomposition
or reliable semantic-equivalence criterion exists; such settings would require
an additional graph-induction or representation-learning mechanism. Thus, the
general contribution is ambiguity-aware reasoning over task-appropriate
relational representations rather than a universal graph parser.

\section{Theoretical Analysis}
\label{sec:theory}

We analyze ambiguity-aware structured generation under the candidate-space formulation introduced in Section~\ref{sec:method}. The analysis separates three roles of the framework: candidate generation provides coverage, multi-valid training avoids suppressing valid alternatives, and graph-relational scoring increases valid-invalid separation. All proofs are deferred to Appendix~\ref{app:theory_proofs}.

\noindent{\textbf{\textit{Ambiguous structured output space:}}}
Let $x\in\mathcal{X}$ denote an input query and let $\mathcal{Z}$ be the space of structured outputs. The set of all semantically valid outputs for $x$ is $\mathcal{Y}(x)\subseteq\mathcal{Z}$. Deterministic structured generation effectively treats the supervision signal as if $|\mathcal{Y}(x)|=1$, while ambiguous inputs satisfy $|\mathcal{Y}(x)|>1$. A generator $q_{\phi}(s|x)$ produces a candidate set $\mathcal{S}(x)=\{s_1,\ldots,s_k\}$, and the valid candidate subset is
$
    \mathcal{S}_{\mathrm{valid}}(x)
    =
    \mathcal{S}(x)\cap\mathcal{Y}(x).
$
Coverage holds when $\mathcal{S}_{\mathrm{valid}}(x)\neq\emptyset$. This condition is necessary for valid selection, but it does not by itself determine which candidate is selected.

\begin{lemma}[Candidate coverage]
\label{lem:candidate_coverage}
Assume candidates are independently sampled from $q_{\phi}(s|x)$, and let $p_{\mathrm{valid}}(x)=q_{\phi}(\mathcal{Y}(x)|x)$ be the probability that one sampled candidate is valid. Then
\begin{equation}
    \Pr\!\left[\mathcal{S}_{\mathrm{valid}}(x)\neq\emptyset\right]
    =
    1-\left(1-p_{\mathrm{valid}}(x)\right)^k .
\end{equation}
\end{lemma}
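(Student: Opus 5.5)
The proof is a complement-event computation. We pass from ``at least one valid candidate'' to ``no valid candidate'' and then use independence to factor.

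\textbf{Step 1 (reduce to individual candidates).} Write $\mathcal{S}(x)=\{s_1,\ldots,s_k\}$ with $s_i\stackrel{\text{i.i.d.}}{\sim} q_{\phi}(\cdot|x)$. Since $\mathcal{S}_{\mathrm{valid}}(x)=\mathcal{S}(x)\cap\mathcal{Y}(x)$, we have
\[
\{\mathcal{S}_{\mathrm{valid}}(x)=\emptyset\}=\bigcap_{i=1}^k\{s_i\notin\mathcal{Y}(x)\}.
\]
We treat the candidate set as the multiset of draws, so that repeated samples do not change this event identity. Membership of the set in $\mathcal{Y}(x)$ depends only on the individual draws, not on deduplication.

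\textbf{Step 2 (per-candidate failure probability).} For each $i$,
\[
\Pr[s_i\notin\mathcal{Y}(x)]=1-q_{\phi}(\mathcal{Y}(x)|x)=1-p_{\mathrm{valid}}(x),
\]
by the definition of $p_{\mathrm{valid}}(x)$.

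\textbf{Step 3 (factor and complement).} Independence of the draws lets the probability of the intersection factor as a product. This gives
\[
\Pr[\mathcal{S}_{\mathrm{valid}}(x)=\emptyset]=(1-p_{\mathrm{valid}}(x))^k.
\]
Taking the complement yields the stated formula
\[
\Pr[\mathcal{S}_{\mathrm{valid}}(x)\neq\emptyset]=1-(1-p_{\mathrm{valid}}(x))^k.
\]

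The only point needing care, and the main obstacle though a mild one, is measurability and the convention for $\mathcal{S}(x)$. We must assume $\mathcal{Y}(x)$ is a measurable subset of $\mathcal{Z}$ so that $q_{\phi}(\mathcal{Y}(x)|x)$ is defined. We also take $\mathcal{S}(x)$ to consist of $k$ independent draws, as the lemma's hypothesis states; this excludes beam search or sampling without replacement. I would state both explicitly at the start of the proof.

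Optionally, I would also note the consequence $1-(1-p)^k\ge 1-e^{-kp}$, which follows from $1-p\le e^{-p}$. It shows coverage tends to $1$ as $k\to\infty$ whenever $p_{\mathrm{valid}}(x)>0$.
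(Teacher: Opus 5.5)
Your proof is correct and follows the paper's argument: you identify $\{\mathcal{S}_{\mathrm{valid}}(x)=\emptyset\}$ with the event that all $k$ independent draws are invalid, factor its probability as $(1-p_{\mathrm{valid}}(x))^k$, and take the complement. Your remark that repeated samples or deduplication leave the coverage event unchanged also matches the paper's closing note.
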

The proof is provided in Appendix~\ref{app:proof_candidate_coverage}.
Lemma~\ref{lem:candidate_coverage} shows that increasing $k$ improves the probability that at least one valid structure appears in the candidate set. It does not guarantee correct selection, which still depends on the scoring function.

\noindent{\textbf{\textit{Single-reference and multi-valid objectives:}}}
For each $s\in\mathcal{S}(x)$, let $z_s=\mathrm{Score}(s,x)$ and define the candidate softmax
$
    P_{\theta}(s|x)
    =
    \frac{\exp(z_s)}
    {\sum_{\tilde{s}\in\mathcal{S}(x)}\exp(z_{\tilde{s}})}.
$
With a single annotated reference $s_{\mathrm{ref}}\in\mathcal{Y}(x)$, the standard objective is $\mathcal{L}_{\mathrm{single}}=-\log P_{\theta}(s_{\mathrm{ref}}|x)$. Under ambiguity, we instead define the valid-set probability $Q_{\theta}(x)=\sum_{s\in\mathcal{S}_{\mathrm{valid}}(x)}P_{\theta}(s|x)$ and optimize $\mathcal{L}_{\mathrm{multi}}=-\log Q_{\theta}(x)$.

\begin{proposition}[Single-reference gradients suppress valid alternatives]
\label{prop:single_ref_misalignment}
Assume $|\mathcal{Y}(x)|>1$ and let $s'\in\mathcal{S}_{\mathrm{valid}}(x)$ with $s'\neq s_{\mathrm{ref}}$ . Under $\mathcal{L}_{\mathrm{single}}$,
$
    \frac{\partial \mathcal{L}_{\mathrm{single}}}{\partial z_{s'}}
    =
    P_{\theta}(s'|x)>0 .
$
Thus, gradient descent decreases the score of $s'$, although $s'$ is semantically valid.
\end{proposition}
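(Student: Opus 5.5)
The plan is a direct computation of the gradient of the log-softmax, treating the candidate scores $\{z_s\}_{s\in\mathcal{S}(x)}$ as free variables. First, rewrite the single-reference objective as
\begin{equation*}
\mathcal{L}_{\mathrm{single}} = -z_{s_{\mathrm{ref}}} + \log \sum_{\tilde{s}\in\mathcal{S}(x)} \exp(z_{\tilde{s}}),
\end{equation*}
which separates the reference score from the log-partition term. Here $s_{\mathrm{ref}}\in\mathcal{S}(x)$ is implicitly assumed, as required for $P_{\theta}(s_{\mathrm{ref}}|x)$ to be defined.

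Next, differentiate with respect to $z_{s'}$ for the given valid $s'\neq s_{\mathrm{ref}}$. Since $s'\neq s_{\mathrm{ref}}$, the first term contributes $0$. The log-sum-exp term contributes $\exp(z_{s'})/\sum_{\tilde{s}}\exp(z_{\tilde{s}}) = P_{\theta}(s'|x)$, so the derivative equals $P_{\theta}(s'|x)$. Strict positivity follows because the exponential is strictly positive and $\mathcal{S}(x)$ is finite, so the denominator is finite and positive.

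Finally, interpret the sign. A gradient-descent step with step size $\eta>0$ updates $z_{s'} \leftarrow z_{s'} - \eta P_{\theta}(s'|x)$, which strictly decreases $z_{s'}$. If the scores are produced by parameters $\theta$ through the GNN and scoring heads, I would remark that the statement concerns the gradient with respect to the logit $z_{s'}$. By the chain rule, the parameter update pushes $z_{s'}$ downward to first order: the induced change is $-\eta P_{\theta}(s'|x)\,\|\nabla_\theta z_{s'}\|^2$ plus cross terms. I would therefore state the claim at the logit level, as the proposition does.

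The assumption $|\mathcal{Y}(x)|>1$ is needed only to guarantee that such an $s'$ can exist. I would note that validity of $s'$ plays no role in the calculation, and that this indifference is exactly the point of the proposition. The only mildly delicate step is being explicit about the logit-level versus parameter-level interpretation. Everything else is routine.
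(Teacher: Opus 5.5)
Your proof is correct and is essentially the paper's argument. Both rewrite $\mathcal{L}_{\mathrm{single}}=-z_{s_{\mathrm{ref}}}+\log\sum_{\tilde s\in\mathcal{S}(x)}\exp(z_{\tilde s})$, differentiate the log-partition term to get $P_{\theta}(s'|x)>0$, and conclude with a logit-level gradient step; the paper additionally records $\partial\mathcal{L}_{\mathrm{single}}/\partial z_{s_{\mathrm{ref}}}=P_{\theta}(s_{\mathrm{ref}}|x)-1<0$.

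You are right to keep the claim at the logit level, but drop the parameter-level aside, because it overstates things. The cross term with $s_{\mathrm{ref}}$ contributes $+\eta\,(1-P_{\theta}(s_{\mathrm{ref}}|x))\langle\nabla_\theta z_{s'},\nabla_\theta z_{s_{\mathrm{ref}}}\rangle$. This can outweigh $-\eta P_{\theta}(s'|x)\|\nabla_\theta z_{s'}\|^2$ when the two candidates have aligned score gradients, so a parameter update need not decrease $z_{s'}$ even to first order.
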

The proof is provided in Appendix~\ref{app:proof_single_ref}.
Proposition~\ref{prop:single_ref_misalignment} formalizes the mismatch created by single-reference cross-entropy. Every non-reference candidate is treated as a negative, including valid alternatives.

\begin{proposition}[Multi-valid gradients preserve valid alternatives]
\label{prop:multi_valid_gradient}
For $\mathcal{L}_{\mathrm{multi}}=-\log Q_{\theta}(x)$, the gradient with respect to $z_s$ is
\begin{equation}
    \frac{\partial \mathcal{L}_{\mathrm{multi}}}{\partial z_s}
    =
    \begin{cases}
    P_{\theta}(s|x)\left(1-\frac{1}{Q_{\theta}(x)}\right),
    & s\in\mathcal{S}_{\mathrm{valid}}(x),\\[0.5em]
    P_{\theta}(s|x),
    & s\notin\mathcal{S}_{\mathrm{valid}}(x).
    \end{cases}
\end{equation}
Hence, valid candidates receive non-positive gradients, while invalid candidates receive positive gradients. Moreover, if $s_{\mathrm{ref}}\in\mathcal{S}_{\mathrm{valid}}(x)$, then $\mathcal{L}_{\mathrm{multi}}\leq\mathcal{L}_{\mathrm{single}}$.
\end{proposition}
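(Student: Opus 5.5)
The plan is a direct differentiation, organized around the log-sum-exp form of the objective. First we rewrite
\[
\mathcal{L}_{\mathrm{multi}} = -\log\sum_{s\in\mathcal{S}_{\mathrm{valid}}(x)} e^{z_s} + \log\sum_{\tilde s\in\mathcal{S}(x)} e^{z_{\tilde s}},
\]
which is legitimate because $Q_\theta(x)$ is a ratio of two exponential sums sharing the same denominator. We implicitly assume coverage, $\mathcal{S}_{\mathrm{valid}}(x)\neq\emptyset$ in the sense of Lemma~\ref{lem:candidate_coverage}; otherwise $Q_\theta(x)=0$ and the loss is undefined.

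\textbf{Gradient formula.} The derivative of the second term with respect to any $z_s$ is the softmax value $P_\theta(s|x)$. The derivative of the first term is $-e^{z_s}/\sum_{s''\in\mathcal{S}_{\mathrm{valid}}(x)} e^{z_{s''}}$ when $s$ is valid, and zero otherwise. Dividing numerator and denominator by the full partition sum turns this into $-P_\theta(s|x)/Q_\theta(x)$. Combining the two terms gives $P_\theta(s|x)(1-1/Q_\theta(x))$ for valid $s$ and $P_\theta(s|x)$ for invalid $s$, which is the case split in the statement.

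\textbf{Signs.} Since $0<Q_\theta(x)\le 1$, we have $1-1/Q_\theta(x)\le 0$. Because $P_\theta(s|x)>0$, valid candidates get non-positive gradients, with equality exactly when $Q_\theta(x)=1$, i.e.\ when every candidate is valid. Invalid candidates get the strictly positive gradient $P_\theta(s|x)$. Under gradient descent, this means valid scores are pushed up (or left unchanged) and invalid scores are pushed down. This is the contrast with Proposition~\ref{prop:single_ref_misalignment}, where a valid non-reference candidate is penalized.

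\textbf{Loss comparison.} If $s_{\mathrm{ref}}\in\mathcal{S}_{\mathrm{valid}}(x)$, then $Q_\theta(x)\ge P_\theta(s_{\mathrm{ref}}|x)$, because all the summands in $Q_\theta(x)$ are nonnegative. Since $-\log$ is decreasing, it follows that $\mathcal{L}_{\mathrm{multi}}\le\mathcal{L}_{\mathrm{single}}$.

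None of these steps is a real obstacle. The only delicate point is bookkeeping: valid $s$ appear in both exponential sums, so their derivative has two contributions. One must also state the nonemptiness assumption so that $1/Q_\theta(x)$ is well defined.
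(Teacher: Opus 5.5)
Your proposal is correct and matches the paper's proof in substance. The paper differentiates $Q_\theta(x)$ through the softmax Jacobian $\partial P_\theta(r|x)/\partial z_s = P_\theta(r|x)(\mathbf{1}\{r=s\}-P_\theta(s|x))$ and then applies the chain rule to $-\log Q_\theta(x)$, whereas you split the loss into two log-sum-exp terms, which is the same computation with lighter bookkeeping; the nonemptiness assumption, the sign analysis, and the loss comparison via $Q_\theta(x)\ge P_\theta(s_{\mathrm{ref}}|x)$ and monotonicity of $-\log$ are the same as in the paper.
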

The proof is provided in Appendix~\ref{app:proof_multi_valid_gradient}.
Proposition~\ref{prop:multi_valid_gradient} shows that the multi-valid objective increases probability mass over the valid candidate set rather than concentrating supervision on one reference.

\noindent{\textbf{\textit{Valid-structure recovery:}}}
Let $U(s,x)$ be an ideal utility that ranks semantically valid and structurally coherent candidates above invalid ones. The learned scorer is
$
    \widehat U(s,x)
    =
    \mathrm{Score}(s,x)
    =
    \alpha S_{\mathrm{sem}}(s,x)
    +
    \beta S_{\mathrm{str}}(s)
    +
    \gamma S_{\mathrm{rel}}(s,x),
$
where the three terms correspond to semantic support, structural validity, and graph-relational agreement. If $\mathcal{S}(x)\setminus\mathcal{S}_{\mathrm{valid}}(x)=\emptyset$, the result is immediate. Otherwise, define the ideal valid-invalid margin
$
    \Delta(x)
    =
    \min_{s^+\in\mathcal{S}_{\mathrm{valid}}(x)}
    U(s^+,x)
    -
    \max_{s^-\in\mathcal{S}(x)\setminus\mathcal{S}_{\mathrm{valid}}(x)}
    U(s^-,x).
$

\begin{theorem}[Valid-structure recovery under coverage and score accuracy]
\label{thm:valid_recovery}
Suppose $\mathcal{S}_{\mathrm{valid}}(x)\neq\emptyset$, $\Delta(x)>0$, and $|\widehat U(s,x)-U(s,x)|\leq\epsilon$ for all $s\in\mathcal{S}(x)$. If $\epsilon<\Delta(x)/2$, then
$
    \hat{s}
    =
    \arg\max_{s\in\mathcal{S}(x)}
    \widehat U(s,x)
$
satisfies $\hat{s}\in\mathcal{S}_{\mathrm{valid}}(x)$.
\end{theorem}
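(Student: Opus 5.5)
The plan is a direct margin argument: show that every valid candidate strictly beats every invalid candidate under $\widehat U$, so the maximizer cannot be invalid. The degenerate case $\mathcal{S}(x)\setminus\mathcal{S}_{\mathrm{valid}}(x)=\emptyset$ is handled first and separately. There every candidate is valid, so $\hat{s}\in\mathcal{S}(x)=\mathcal{S}_{\mathrm{valid}}(x)$ trivially. In this case $\Delta(x)$ is not even needed; the hypothesis $\mathcal{S}_{\mathrm{valid}}(x)\neq\emptyset$ guarantees the set is nonempty, so the argmax exists, since $\mathcal{S}(x)$ is finite.

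In the main case, fix arbitrary $s^+\in\mathcal{S}_{\mathrm{valid}}(x)$ and $s^-\in\mathcal{S}(x)\setminus\mathcal{S}_{\mathrm{valid}}(x)$. The uniform accuracy assumption gives $\widehat U(s^+,x)\geq U(s^+,x)-\epsilon$ and $\widehat U(s^-,x)\leq U(s^-,x)+\epsilon$. Subtracting yields
\begin{equation*}
\widehat U(s^+,x)-\widehat U(s^-,x)\;\geq\; U(s^+,x)-U(s^-,x)-2\epsilon .
\end{equation*}
By the definition of $\Delta(x)$ as a min over valid minus a max over invalid, we have $U(s^+,x)-U(s^-,x)\geq\Delta(x)$. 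Hence the right-hand side is at least $\Delta(x)-2\epsilon>0$, using $\epsilon<\Delta(x)/2$.

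To conclude, suppose for contradiction that $\hat{s}\notin\mathcal{S}_{\mathrm{valid}}(x)$. Take any $s^+\in\mathcal{S}_{\mathrm{valid}}(x)$, which exists by coverage. The previous step gives $\widehat U(s^+,x)>\widehat U(\hat{s},x)$, contradicting that $\hat{s}$ maximizes $\widehat U$ over $\mathcal{S}(x)$. The conclusion therefore holds for any tie-breaking rule in the argmax, because the inequality is strict.

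There is no real obstacle. The only points needing care are that the strict inequality $\epsilon<\Delta(x)/2$ is what rules out ties between a valid and an invalid candidate, and that the min and max in $\Delta(x)$ are over finite nonempty sets in the main case, so $\Delta(x)$ is well defined.
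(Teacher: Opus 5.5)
Your proof is correct and follows essentially the same route as the paper. You dispose of the all-valid case first, then combine the two one-sided bounds from uniform $\epsilon$-accuracy with the definition of $\Delta(x)$ to get $\widehat U(s^+,x)-\widehat U(s^-,x)\geq\Delta(x)-2\epsilon>0$ for every valid--invalid pair, which rules out an invalid maximizer (your added remark that the strict inequality makes the conclusion independent of tie-breaking is also correct).
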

 See Appendix~\ref{app:proof_valid_recovery} for the proof.
Theorem~\ref{thm:valid_recovery} states that valid recovery requires both coverage and sufficiently accurate score estimation. In the proposed method, the scoring decomposition is useful when its semantic, structural, and graph-relational terms enlarge the valid-invalid margin or reduce scoring error.

\noindent{\textbf{\textit{Graph-relational margin:}}}
Each candidate $s$ is mapped to a graph $G_s=(V_s,E_s)$. Let $\varphi_G(G_s)$ be a graph representation of the candidate and let $\varphi_X(x)$ be an input-side relational representation. Define the relational mismatch
$
    d_{\mathrm{rel}}(s,x)
    =
    \|\varphi_G(G_s)-\varphi_X(x)\|_2,
    \enspace
    S_{\mathrm{rel}}(s,x)=-d_{\mathrm{rel}}(s,x).
$
This term is intended to capture relational ambiguity, including joins, grouping, condition attachment, attribute-value links, and aggregation structure.

\begin{proposition}[Graph-relational scoring increases valid-invalid separation]
\label{prop:relational_margin}
Assume that valid candidates are relationally closer to the input than invalid candidates by margin $\rho>0$, that is,
\begin{equation}
    d_{\mathrm{rel}}(s^-,x)-d_{\mathrm{rel}}(s^+,x)\geq\rho,
    \qquad
    \forall s^+\in\mathcal{S}_{\mathrm{valid}}(x),\;
    s^-\in\mathcal{S}(x)\setminus\mathcal{S}_{\mathrm{valid}}(x).
\end{equation}
Then adding $\gamma S_{\mathrm{rel}}(s,x)$ to the score increases the valid-invalid score gap by at least $\gamma\rho$.
\end{proposition}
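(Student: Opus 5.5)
The plan is a direct computation with the score decomposition. The first step is to make precise what ``valid-invalid score gap'' means. I will write the score without the relational term as $\widehat U_0(s,x)=\alpha S_{\mathrm{sem}}(s,x)+\beta S_{\mathrm{str}}(s)$ and the augmented score as $\widehat U(s,x)=\widehat U_0(s,x)+\gamma S_{\mathrm{rel}}(s,x)$. For any pair $s^+\in\mathcal{S}_{\mathrm{valid}}(x)$ and $s^-\in\mathcal{S}(x)\setminus\mathcal{S}_{\mathrm{valid}}(x)$, the pairwise gap is $g(s^+,s^-)=\widehat U(s^+,x)-\widehat U(s^-,x)$. The set-level gap is the worst-case quantity
\[
\widehat\Delta(x)=\min_{s^+}\widehat U(s^+,x)-\max_{s^-}\widehat U(s^-,x),
\]
which mirrors the definition of $\Delta(x)$ preceding Theorem~\ref{thm:valid_recovery}. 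If the invalid set is empty there is nothing to prove, as noted before Theorem~\ref{thm:valid_recovery}.

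Next comes the pairwise identity. Substituting $S_{\mathrm{rel}}=-d_{\mathrm{rel}}$ gives
\[
g(s^+,s^-)=g_0(s^+,s^-)+\gamma\bigl(d_{\mathrm{rel}}(s^-,x)-d_{\mathrm{rel}}(s^+,x)\bigr).
\]
Here $g_0$ is the gap under $\widehat U_0$. Since $\gamma\ge 0$ and the assumed margin condition gives $d_{\mathrm{rel}}(s^-,x)-d_{\mathrm{rel}}(s^+,x)\ge\rho$, it follows that $g\ge g_0+\gamma\rho$ for every valid-invalid pair. This is the core of the statement.

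To lift this to the set-level gap, I would not pass through the min and max separately, because the minimizing $s^+$ and maximizing $s^-$ may change after the relational term is added. Instead I will use the fact that $\widehat\Delta(x)=\min_{s^+,s^-} g(s^+,s^-)$ over all valid-invalid pairs, which holds because $\min_a f(a)-\max_b h(b)=\min_{a,b}\bigl(f(a)-h(b)\bigr)$. A pointwise lower bound $g\ge g_0+\gamma\rho$ then passes through the minimum over pairs, giving $\widehat\Delta(x)\ge\widehat\Delta_0(x)+\gamma\rho$.

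The only delicate point is this min/max bookkeeping, together with noting that $\gamma\ge0$ is needed for the direction of the inequality; the rest is algebra. As a closing remark, I will note the consequence for Theorem~\ref{thm:valid_recovery}: if the ideal utility $U$ is taken to include the same relational term, the margin $\Delta(x)$ grows by at least $\gamma\rho$. This relaxes the accuracy requirement to $\epsilon<(\Delta_0(x)+\gamma\rho)/2$.
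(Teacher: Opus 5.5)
Your proposal is correct and follows the paper's proof: both split off the non-relational part of the score, substitute $S_{\mathrm{rel}}=-d_{\mathrm{rel}}$, and use $\gamma\ge 0$ with the margin assumption to get a pairwise gap of at least $g_0(s^+,s^-)+\gamma\rho$. Your extra step of passing this bound to the worst-case set-level gap via $\min_a f(a)-\max_b h(b)=\min_{a,b}\bigl(f(a)-h(b)\bigr)$ is valid and slightly strengthens the paper's conclusion, which stops at the pairwise statement.
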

The proof is given in Appendix~\ref{app:proof_relational_margin}.
Proposition~\ref{prop:relational_margin} ties graph reasoning directly to ambiguity-aware selection. When candidates are lexically similar but differ in relational structure, the graph-relational term provides an additional margin for selection.

\begin{corollary}[Selection stability under score perturbations]
\label{cor:selection_stability}
Let $\hat{s}=\arg\max_{s\in\mathcal{S}(x)}\widehat U(s,x)$ and define the learned score margin
$
    m(x)
    =
    \widehat U(\hat{s},x)
    -
    \max_{s\neq\hat{s}}
    \widehat U(s,x).
$
If perturbed scores $\widetilde U(s,x)$ satisfy $|\widetilde U(s,x)-\widehat U(s,x)|\leq\eta$ for all $s\in\mathcal{S}(x)$ and $\eta<m(x)/2$, then $\hat{s}$ remains the selected candidate.
\end{corollary}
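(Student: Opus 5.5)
The argument is a direct perturbation estimate: we compare the perturbed score of $\hat{s}$ with the perturbed score of every competitor, using the uniform bound $\eta$ on each side. Fix any $s\in\mathcal{S}(x)$ with $s\neq\hat{s}$. By the definition of $m(x)$, we have $\widehat U(\hat{s},x)-\widehat U(s,x)\geq m(x)$. Applying the perturbation bound to both candidates gives $\widetilde U(\hat{s},x)\geq \widehat U(\hat{s},x)-\eta$ and $\widetilde U(s,x)\leq \widehat U(s,x)+\eta$. Subtracting,
\begin{equation}
\widetilde U(\hat{s},x)-\widetilde U(s,x)\;\geq\; \widehat U(\hat{s},x)-\widehat U(s,x)-2\eta\;\geq\; m(x)-2\eta\;>\;0,
\end{equation}
where the last inequality is exactly the hypothesis $\eta<m(x)/2$.

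Since this strict inequality holds for every $s\neq\hat{s}$, the candidate $\hat{s}$ is the unique maximizer of $\widetilde U(\cdot,x)$ over the finite set $\mathcal{S}(x)$. Hence $\arg\max_{s\in\mathcal{S}(x)}\widetilde U(s,x)=\hat{s}$, so the selection is unchanged. This is the same two-sided $\epsilon$-sandwich used for Theorem~\ref{thm:valid_recovery}, with the ideal utility $U$ replaced by the learned scores $\widehat U$ and the margin $\Delta(x)$ replaced by $m(x)$.

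The only delicate point is degeneracy. If $m(x)=0$ (a tie in $\widehat U$), the hypothesis $\eta<m(x)/2$ cannot hold, so the statement is vacuous and the strict margin is what makes $\hat{s}$ well defined. If $|\mathcal{S}(x)|=1$, the maximum over $s\neq\hat{s}$ is empty; we adopt the convention $m(x)=+\infty$, and the claim is then trivial.
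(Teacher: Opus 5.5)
Your proof is correct and follows the paper's argument: the two-sided bound gives $\widetilde U(\hat{s},x)-\widetilde U(s,x)\geq m(x)-2\eta>0$ for every competitor $s\neq\hat{s}$, so $\hat{s}$ is the unique perturbed maximizer. You also handle the same degenerate cases as the paper, namely the trivial singleton case and the fact that the hypothesis forces $m(x)>0$.
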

See Appendix~\ref{app:proof_selection_stability} for the proof.
Corollary~\ref{cor:selection_stability} frames stability under decoding randomness, scorer perturbations, or small candidate-score noise. Larger learned margins reduce output instability under ambiguity.

\textbf{Remark 1.} Propositions~\ref{prop:single_ref_misalignment} and~\ref{prop:multi_valid_gradient} characterize the training mismatch caused by single-reference supervision. When several candidates are valid, the single-reference loss pushes down every non-reference candidate, whereas the multi-valid objective optimizes the probability of the valid set. This motivates evaluating predictions against multiple valid references rather than a single annotated target, as reported in Table~\ref{tab:multi_ref_results}.

\textbf{Remark 2.} Lemma~\ref{lem:candidate_coverage}, Theorem~\ref{thm:valid_recovery}, Proposition~\ref{prop:relational_margin}, and Corollary~\ref{cor:selection_stability} identify the roles of generation, scoring, and graph reasoning. Candidate generation controls whether a valid structure is present, score accuracy determines whether it can be recovered, and graph-relational evidence increases the margin needed for stable selection. This links the theory to the candidate-size, graph-ablation, graph-separability, and consistency analyses in Tables~\ref{tab:ablation_k}, \ref{tab:ablation_graph}, \ref{tab:graph_separability}, and~\ref{tab:consistency}.

\section{Experiments}

\subsection{Experimental Setup}
We evaluate our framework through four research questions (RQs):

\textbf{RQ1:} Does ambiguity-aware modeling improve structured generation over deterministic approaches? \\
\textbf{RQ2:} Does graph-based reasoning improve selection beyond candidate generation and reranking? \\
\textbf{RQ3:} How does performance vary across different levels of input ambiguity? \\
\textbf{RQ4:} Which components contribute most to performance?

\textbf{Implementation Details.}
We use LLaMA-2-13B~\cite{touvron2023llama} for stochastic decoding
with temperature sampling ($T{=}0.7$) and generate $k{=}5$ candidate
outputs per input. Each candidate is converted into a task-specific
relational graph and encoded using an edge-aware GNN with $L{=}2$
message-passing layers followed by mean pooling. The GNN encoder and
candidate-scoring module are optimized jointly using the multi-valid
objective with Adam~\cite{kingma2014adam}, a learning rate of
$1{\times}10^{-4}$, batch size 32, and early stopping. Because nvBench
and Spider use different graph schemas, the scorer is trained separately
for each benchmark, while the GNN architecture, scoring formulation,
and learning objective remain unchanged. Results are averaged over three
random seeds. Additional implementation details are provided in
Appendix~C.

\textbf{Baselines.}
We compare against representative deterministic and candidate-based
methods. Under the standard one-to-one formulation,
\textbf{Deterministic LLM} directly generates a single structured output
per input. \textbf{Candidate Only} generates the same candidate pool used
by the candidate-based methods but selects the first candidate in the
generator's native generation order, without any additional scoring or
reranking. \textbf{Candidate + Likelihood} instead ranks the generated
candidates according to their generation likelihood.
\textbf{Candidate + LLM Rerank} applies an additional LLM-based semantic
scoring pass to rerank the same candidate pool. These baselines therefore
separate the effects of candidate generation, likelihood-based selection,
and additional semantic reranking from the proposed graph-based reasoning
module (see Appendix~\ref{app:metrics_baselines} for details).

\subsection{Datasets}
\begin{wraptable}{r}{0.55\linewidth}
\centering
\footnotesize
\vspace{-2em}
\caption{\small Dataset statistics with ambiguity analysis. Inputs are categorized into Low, Medium, and High ambiguity based on the number of valid structured interpretations.}
\label{tab:dataset_stats}
\renewcommand{\arraystretch}{1.2}
\setlength{\tabcolsep}{2pt}
\begin{tabular}{l|c|c|c|c|c|c}
\hline
\textbf{Dataset} & \textbf{\#Samples} & \textbf{Avg Len} & \textbf{\#Valid Outputs} & \textbf{Low} & \textbf{Med} & \textbf{High} \\
\hline
nvBench & 7,878 & 14.2 & 2.6 & 25\% & 33\% & 42\% \\
Spider  & 10,181 & 18.7 & 1.8 & 40\% & 35\% & 25\% \\
\hline
\end{tabular}
\vspace{-1em}
\end{wraptable}

We evaluate on structured generation benchmarks with inherent ambiguity. \textbf{nvBench}~\cite{luo2025nvbench}. A text-to-visualization dataset where, depending on aggregation, encoding, and grouping decisions, a query may correspond to multiple valid specifications. \textbf{Spider}~\cite{yu2018spider}. A large-scale text-to-SQL benchmark that introduces implicit ambiguity by allowing multiple SQL queries to be execution-equivalent.

\textbf{Ambiguity Annotation.}
We construct a candidate set for each input and determine valid outputs based on semantic equivalence. For nvBench, candidates are valid if they produce comparable visual encodings; for Spider, validity is based on execution equivalence. To analyze ambiguity, we categorize inputs by the number of valid candidates: \textit{low} (1), \textit{medium} (2--3), and \textit{high} ($\geq$4).

\subsection{Main Results}
Table~\ref{tab:main_results} answers \textbf{RQ1} by comparing our approach with candidate-based and deterministic baselines. Across both datasets, our method consistently outperforms all baselines, demonstrating the effectiveness of ambiguity-aware modeling. Candidate generation improves performance by exposing multiple plausible interpretations, while deterministic models degrade under ambiguity due to the one-to-one assumption. However, likelihood-based selection and LLM reranking remain limited, relying on surface-level plausibility rather than relational reasoning. By modeling relational consistency, our approach achieves the strongest performance. The gap between Top-$k$ and EM shows that candidate generation alone is insufficient, while Top-$k$ provides an approximate upper bound given the candidate pool. Gains are more pronounced on nvBench, which exhibits higher ambiguity.
\begin{wraptable}{r}{0.45\linewidth}
\centering
\footnotesize
\vspace{-1em}
\caption{\small Main results on structured generation tasks. We report Exact Match (EM) and Top-$k$ Success. Best results are shown in bold.}
\label{tab:main_results}
\renewcommand{\arraystretch}{1.2}
\setlength{\tabcolsep}{3pt}
\begin{tabular}{l|cc|cc}
\hline
\textbf{Method} & \multicolumn{2}{c|}{\textbf{nvBench}} & \multicolumn{2}{c}{\textbf{Spider}} \\
 & EM & Top-5 & EM & Top-5 \\
\hline
Deterministic (LLM) & 48.3 & -- & 62.5 & -- \\
Candidate + Likelihood & 52.1 & 71.4 & 65.2 & 79.8 \\
Candidate + LLM Rerank & 54.7 & 74.3 & 66.8 & 82.1 \\
\hline
\textbf{Ours} & \textbf{58.9} & \textbf{81.6} & \textbf{70.3} & \textbf{86.5} \\
\hline
\end{tabular}
\vspace{-5em}
\end{wraptable}

\subsection{Analysis on Ambiguity}
This section addresses \textbf{RQ3} by examining how performance varies across different levels and types of ambiguity.

\textbf{Performance under Different Ambiguity Levels.}
Based on the number and diversity of plausible interpretations, we categorize inputs into low, medium, and high ambiguity. In low-ambiguity settings, where a dominant structure exists, all methods perform comparably, as shown in Figure~\ref{fig:ambiguity_levels}. However, as ambiguity increases, deterministic and candidate-based approaches degrade significantly. In contrast, our approach demonstrates stronger robustness under ambiguous inputs by maintaining a smaller performance drop and achieving the largest gains in high-ambiguity cases.

\textbf{Breakdown by Ambiguity Type.}
We further analyze performance across temporal, aggregation, and relational ambiguity. Deterministic models perform poorly across all types, with the largest drops observed in relational and aggregation cases, as shown in Figure~\ref{fig:ambiguity_types}. While candidate-based approaches provide modest improvements, they remain limited in resolving structural inconsistencies. In relational ambiguity, where modeling entity relationships is critical, our approach consistently outperforms these baselines.

\begin{figure}[t]
    \centering
    \begin{minipage}{0.46\linewidth}
        \centering
        \includegraphics[width=\linewidth]{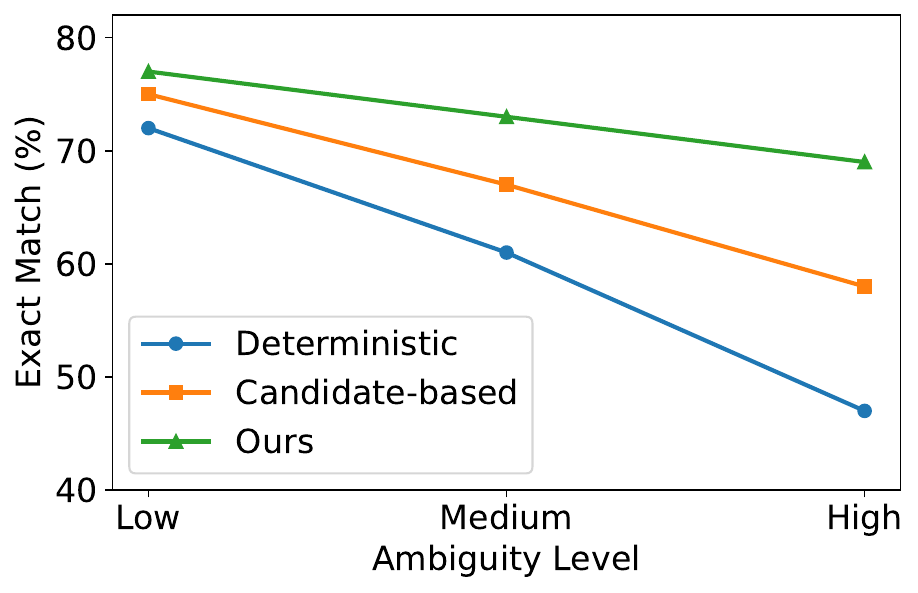}
        \vspace{-2em}
        \caption{Performance under different ambiguity levels.}
        \label{fig:ambiguity_levels}
    \end{minipage}
    \hfill
    \begin{minipage}{0.49\linewidth}
        \centering
        \includegraphics[width=\linewidth]{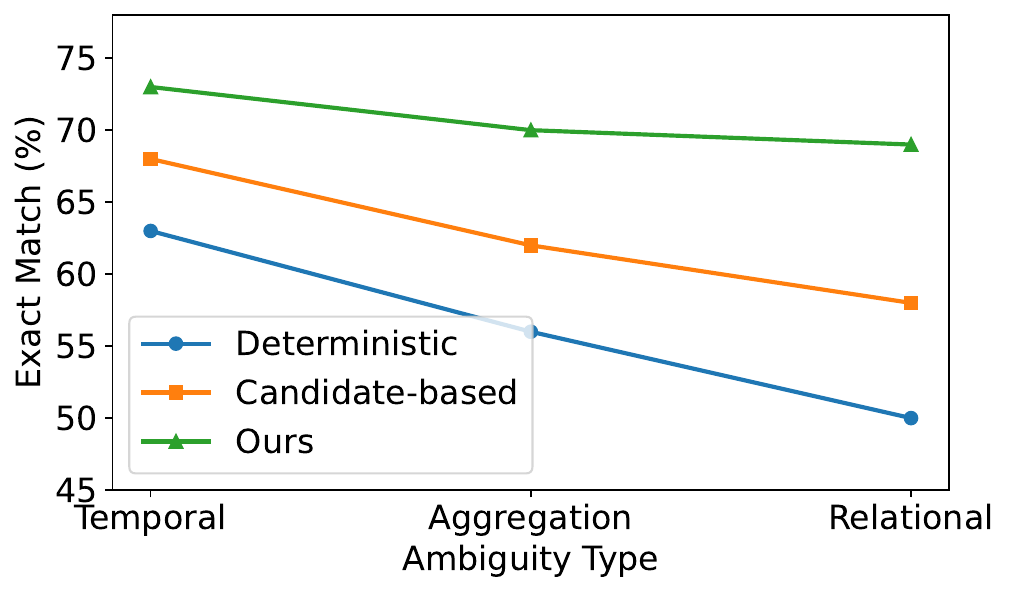}
        \vspace{-2em}
        \caption{Performance across different ambiguity types.}
        \label{fig:ambiguity_types}
    \end{minipage}
    \vspace{-1em}
\end{figure}
\subsection{Evaluation under Multiple Valid Outputs}
\label{sec:multi_valid}

\textbf{Problem Setup.}
This section addresses \textbf{RQ1} and \textbf{RQ3} by evaluating models' ability to handle multiple valid structured outputs rather than a single reference. Standard evaluation relies on a single reference and penalizes semantically correct alternatives, while natural language often admits multiple valid interpretations. To align with our ambiguity-aware formulation, we consider multi-reference settings. For ambiguous inputs, we construct a set of valid structures $\mathcal{S}_{\text{valid}}(x)$ from annotated alternatives or verified high-quality candidates.

\textbf{Multi-Reference Evaluation.}
We extend exact match (EM) to account for multiple references:
\(
\text{Multi-EM} = \max_{s \in \mathcal{S}_{\text{valid}}(x)} \mathbb{1}(\hat{s} = s).
\)
This metric provides a more accurate assessment under ambiguity by rewarding predictions that match any valid structure. Deterministic baselines show limited gains, as illustrated in Table~\ref{tab:multi_ref_results}, whereas our method achieves substantially higher Multi-EM, indicating improved selection among valid alternatives.

\textbf{Consistency under Ambiguity.}
We further evaluate prediction stability by measuring agreement across multiple stochastic outputs. All methods remain consistent in low-ambiguity settings but degrade as ambiguity increases, as shown in Table~\ref{tab:consistency}. In contrast, our approach maintains significantly higher consistency, particularly under high ambiguity, suggesting that graph-based reasoning stabilizes selection among competing interpretations.

\subsection{Ablation Studies}
This section addresses \textbf{RQ2} and \textbf{RQ4} by isolating the impact of candidate set size, scoring components, and graph-based reasoning. The largest improvements arise from graph-based relational reasoning, followed by the combination of structural and semantic scoring.

\textbf{Effect of Graph Reasoning and Multi-Valid Supervision.}
To disentangle the contribution of graph-based reasoning from that of
multi-valid supervision, we additionally evaluate a
\textbf{Graph + Single-Reference} variant. This variant uses the same
candidate sets, graph construction, two-layer GNN, scoring architecture,
optimization settings, and data splits as the full model, but replaces
the multi-valid objective in Eq.~\ref{eq:multivalid} with the
single-reference objective in Eq.~\ref{eq:single}.
As shown in Table~\ref{tab:ablation_graph}, graph-based scoring improves
over LLM reranking by 2.4 EM points on nvBench and 2.0 points on Spider
under the same single-reference setting. Multi-valid supervision provides
a further improvement of 1.8 and 1.5 points, respectively. These results
show that graph-based reasoning and multi-valid supervision provide
complementary gains.

\textbf{Effect of Candidate Set Size.}
As shown in Table~\ref{tab:ablation_k}, increasing the number of candidates initially improves performance by expanding coverage of valid interpretations. However, baseline methods quickly saturate, indicating limited capacity to leverage additional candidates. In contrast, our approach continues to improve as $k$ increases, suggesting more effective utilization of diverse hypotheses through structured scoring.

\textbf{Effect of Scoring Components.}
Table~\ref{tab:ablation_score} indicates that removing either semantic coherence or structural consistency leads to noticeable performance degradation, demonstrating that both signals are essential. Semantic coherence ensures alignment with the input, while structural consistency captures representation validity. Their combination yields the best performance, emphasizing the importance of jointly modeling relational structure and semantic alignment for reliable candidate selection.

\begin{table*}[t]
\centering

\begin{minipage}{0.49\textwidth}
\centering
\footnotesize
\setlength{\tabcolsep}{2pt}
\caption{\small Multi-reference evaluation results. We report standard Exact Match (EM) and Multi-EM, which accounts for multiple valid structured outputs.}
\label{tab:multi_ref_results}
\begin{tabular}{l|cc|cc}
\hline
\multirow{2}{*}{\textbf{Method}} & \multicolumn{2}{c|}{\textbf{nvBench}} & \multicolumn{2}{c}{\textbf{Spider}} \\
 & EM & Multi-EM & EM & Multi-EM \\
\hline
Deterministic (LLM) & 48.3 & 52.7 & 62.5 & 66.1 \\
Candidate + Likelihood & 52.1 & 58.9 & 65.2 & 70.4 \\
Candidate + LLM Rerank & 54.7 & 61.3 & 66.8 & 72.6 \\
\textbf{Ours} & \textbf{58.9} & \textbf{68.4} & \textbf{70.3} & \textbf{78.2} \\
\hline
\end{tabular}
\vspace{-1em}
\end{minipage}
\hfill
\begin{minipage}{0.49\textwidth}
\centering
\footnotesize
\caption{\small Consistency under different ambiguity levels. Higher values indicate more stable predictions across multiple decoding runs.}
\label{tab:consistency}
\begin{tabular}{l|ccc}
\hline
\textbf{Method} & Low & Medium & High \\
\hline
Deterministic (LLM) & 0.91 & 0.68 & 0.52 \\
Candidate + Likelihood & 0.88 & 0.72 & 0.60 \\
Candidate + LLM Rerank & 0.90 & 0.75 & 0.63 \\
\textbf{Ours} & \textbf{0.93} & \textbf{0.82} & \textbf{0.74} \\
\hline
\end{tabular}
\vspace{-1em}
\end{minipage}

\end{table*}

\begin{table*}[t]
\centering
\begin{minipage}{0.32\textwidth}
\centering
\footnotesize
\setlength{\tabcolsep}{1pt}
\caption{\small Effect of graph reasoning and multi-valid supervision.}

\label{tab:ablation_graph}
\begin{adjustbox}{width=\textwidth}
\begin{tabular}{l|cc}
\hline
\textbf{Method} & \textbf{nvBench EM} & \textbf{Spider EM} \\
\hline
\shortstack{Candidate +\\LLM Rerank}
    & 54.7 & 66.8 \\
    \hline
\shortstack{Graph + \\Single-Reference}
    & $57.1 \pm 0.4$ & $68.8 \pm 0.5$ \\
    \hline
\shortstack{Graph + \\Multi-Valid}
    & $\mathbf{58.9 \pm 0.3}$ & $\mathbf{70.3 \pm 0.4}$ \\
\hline
\end{tabular}
\end{adjustbox}
\vspace{-1em}
\end{minipage}
\hfill
\begin{minipage}{0.23\textwidth}
\centering
\footnotesize
\setlength{\tabcolsep}{2pt}
\caption{\small Effect of candidate set size ($k$).}
\label{tab:ablation_k}
\begin{adjustbox}{width=\textwidth}
\begin{tabular}{c|cc|cc}
\hline
\textbf{$k$} & \multicolumn{2}{c|}{\textbf{nvBench}} & \multicolumn{2}{c}{\textbf{Spider}} \\
 & EM & T5 & EM & T5 \\
\hline
1 & 48.3 & -- & 62.5 & -- \\
3 & 54.7 & 74.3 & 66.8 & 82.1 \\
5 & 58.9 & 81.6 & 70.3 & 86.5 \\
10 & 59.5 & 83.0 & 71.0 & 87.2 \\
\hline
\end{tabular}
\end{adjustbox}
\vspace{-1em}
\end{minipage}
\hfill
\begin{minipage}{0.41\textwidth}
\centering
\setlength{\tabcolsep}{1pt}
\large
\caption{\small Effect of scoring components.}
\label{tab:ablation_score}
\begin{adjustbox}{width=\textwidth}
\begin{tabular}{l|cc|cc}
\hline
\textbf{Variant} & \multicolumn{2}{c|}{\textbf{nvBench}} & \multicolumn{2}{c}{\textbf{Spider}} \\
 & EM & T5 & EM & T5 \\
\hline
Full Model & \textbf{58.9} & \textbf{81.6} & \textbf{70.3} & \textbf{86.5} \\
w/o Struct. Consistency & 55.6 & 76.2 & 67.9 & 82.4 \\
w/o Semantic Coherence & 56.2 & 77.1 & 68.3 & 83.0 \\
\hline
\end{tabular}
\end{adjustbox}
\vspace{-1em}
\end{minipage}

\end{table*}

\begin{table*}[t]
\centering

\begin{minipage}{0.42\textwidth}
\centering
\footnotesize
\setlength{\tabcolsep}{1pt}
\caption{\small Candidate graph separability analysis across datasets. Larger gaps indicate better discrimination.}
\label{tab:graph_separability}
\begin{tabular}{l|cc|cc|cc}
\hline
\textbf{Method} 
& \multicolumn{3}{c|}{\textbf{nvBench}} 
& \multicolumn{3}{c}{\textbf{Spider}} \\
 & Valid & Invalid & Gap & Valid & Invalid & Gap \\
\hline
Can + Likelihood & 0.71 & 0.58 & 0.13 & 0.74 & 0.60 & 0.14 \\
Can + LLM Rerank & 0.76 & 0.57 & 0.19 & 0.78 & 0.58 & 0.20 \\
\textbf{Ours} & \textbf{0.84} & \textbf{0.49} & \textbf{0.35} 
               & \textbf{0.86} & \textbf{0.47} & \textbf{0.39} \\
\hline
\end{tabular}
\end{minipage}
\hfill
\begin{minipage}{0.52\textwidth}
\centering
\footnotesize
\setlength{\tabcolsep}{1pt}
\caption{\small Performance under varying graph complexity across datasets.}
\label{tab:graph_complexity}
\begin{tabular}{l|ccc|ccc}
\hline
\textbf{Method} 
& \multicolumn{3}{c|}{\textbf{nvBench}} 
& \multicolumn{3}{c}{\textbf{Spider}} \\
 & Small & Medium & Large & Small & Medium & Large \\
\hline
Deterministic (LLM) & 61.2 & 52.8 & 43.5 & 68.9 & 59.4 & 49.2 \\
Can + Likelihood & 65.7 & 57.9 & 48.6 & 72.3 & 63.5 & 53.7 \\
Can + LLM Rerank & 67.3 & 59.8 & 50.2 & 74.0 & 65.2 & 55.6 \\
\textbf{Ours} & \textbf{70.5} & \textbf{64.1} & \textbf{57.8} 
               & \textbf{77.6} & \textbf{71.2} & \textbf{64.3} \\
\hline
\end{tabular}
\end{minipage}

\end{table*}

\begin{table}[t]
\centering
\footnotesize
\caption{\small Failure mode analysis. We report common error types, their frequency (\%), and primary causes.}
\label{tab:failure_cases}
\begin{tabular}{l|c|p{7.2cm}}
\hline
\textbf{Failure Type} & \textbf{Freq (\%)} & \textbf{Primary Cause} \\
\hline
Missing entities in candidate graphs & 28.4 & Incomplete candidate generation leading to missing nodes \\
Incorrect relation extraction & 24.7 & Ambiguous phrasing causing incorrect edge construction \\
Overlapping valid interpretations & 19.3 & Multiple correct outputs not fully captured in annotations \\
Noisy candidate generation & 17.6 & Low-quality or semantically inconsistent candidates \\
Close scoring ambiguity & 10.0 & Small score differences between competing valid structures \\
\hline
\end{tabular}
\vspace{-1em}
\end{table}

\subsection{Graph-Centric Analysis} \label{sec:graph_analysis}
We analyze our framework from a graph-centric perspective by examining candidate separability, robustness to graph complexity, and common failure modes.

\textbf{Candidate Graph Separability.}
As shown in Table~\ref{tab:graph_separability}, baseline methods exhibit limited separation between valid and invalid candidates, indicating difficulty in distinguishing structurally inconsistent interpretations. This gap is modestly improved by LLM reranking, but remains constrained by surface-level signals. In contrast, our approach achieves a substantially larger valid--invalid gap across both datasets by explicitly modeling relational consistency rather than token-level similarity, demonstrating that graph-based reasoning provides stronger discrimination.

\textbf{Effect of Graph Complexity.}
Table~\ref{tab:graph_complexity} shows that performance degrades as graph size increases, reflecting the growing difficulty of modeling complex relational structures. This degradation is more pronounced for deterministic and candidate-based baselines, which struggle to maintain consistency in larger structures. In contrast, our approach remains significantly more robust, particularly for large graphs, demonstrating its ability to capture higher-order dependencies through structured representations.

\textbf{Failure Case Analysis.}
As shown in Table~\ref{tab:failure_cases}, most errors arise from limitations in candidate generation, including missing entities and incorrect relation extraction. Additional failures stem from annotation ambiguity and closely scored competing valid structures. These results suggest that while graph-based reasoning improves selection, overall performance remains constrained by candidate quality, indicating that improvements in generation and more precise scoring could further enhance outcomes. These observations are consistent with our theoretical analysis, which shows that improved valid--invalid separation leads to more stable selection.

\textbf{Comparison with SOTA.}
For completeness, we also report performance under the benchmark-specific
settings used for comparison with representative published systems,
including RAT-SQL~\cite{wang2020rat}, BRIDGE~\cite{lin2020bridging},
PICARD~\cite{scholak2021picard}, RESDSQL~\cite{li2023resdsql}, and
DIN-SQL~\cite{pourreza2023din}. These results use the corresponding
benchmark evaluation configuration and should therefore be distinguished
from the controlled experiments in Table~\ref{tab:main_results}, where
generation and candidate-selection conditions are held fixed for
component-level comparison. Table~\ref{tab:sota} reports the resulting
benchmark-level comparison.

\begin{wraptable}{r}{0.34\linewidth}
\vspace{-0.8em}
\centering
\footnotesize
\setlength{\tabcolsep}{2pt}
\caption{\small Comparison with recent SOTA methods on structured generation benchmarks.}
\label{tab:sota}
\begin{tabular}{l|cc|cc}
\hline
\textbf{Method} & \multicolumn{2}{c|}{\textbf{Spider}} & \multicolumn{2}{c}{\textbf{nvBench}} \\
 & EM & Top-5 & EM & Top-5 \\
\hline
PICARD \cite{scholak2021picard} & 75.5 & 84.3 & 68.2 & 79.5 \\
RESDSQL \cite{li2023resdsql} & 79.9 & 88.1 & 72.5 & 83.7 \\
DIN-SQL \cite{pourreza2023din} & 82.1 & 89.7 & 74.8 & 85.9 \\
BRIDGE \cite{lin2020bridging} & 65.5 & 75.2 & 60.3 & 71.4 \\
RAT-SQL \cite{wang2020rat} & 69.7 & 78.6 & 63.9 & 74.8 \\
\hline
\textbf{Ours} & \textbf{83.4} & \textbf{91.2} & \textbf{76.3} & \textbf{88.5} \\
\hline
\end{tabular}
\vspace{-1em}
\end{wraptable}

\textbf{Computational Cost.}
Generating multiple candidates incurs additional cost relative to
single-output decoding and is shared by all candidate-based methods in our
evaluation. Relative to methods operating on the same candidate pool, our
additional computation consists of task-specific graph construction and
two-layer GNN scoring. The reported experimental runtimes refer to
training and evaluation computation rather than per-example inference
latency; we do not report a dedicated latency or throughput benchmark and
therefore make no claim of real-time efficiency.

\section{Limitations}
Although our framework improves structured generation under ambiguity, it remains dependent on candidate quality and coverage, since the selection module cannot recover a valid structure that was not generated. Multi-valid training also requires reliable identification of valid alternatives, while graph construction remains task-specific and may require adaptation for new domains. In addition, generating multiple candidates and applying graph-based reasoning introduces extra computational cost, and we do not evaluate dedicated inference latency or substantially larger or smaller candidate generators. These limitations motivate future work on improved candidate generation, more general graph construction, efficient selection, and reliable approximation of valid alternatives.

\section{Conclusion}
In this paper, we proposed an ambiguity-aware framework for structured
generation that reframes the task as reasoning over multiple candidate
interpretations rather than predicting a single output. The framework
represents candidate structures as task-specific relational graphs and
evaluates them using semantic coherence, structural validity, and relational
consistency. Experiments on text-to-visualization and text-to-SQL benchmarks
demonstrate consistent improvements in accuracy, robustness, and stability,
particularly under high ambiguity. Controlled ablations further show that
graph-based reasoning improves candidate discrimination under single-reference
supervision, while multi-valid training provides additional gains by
preserving alternative valid interpretations. These results highlight the
limitations of deterministic single-output formulations and support explicit
modeling of ambiguity in structured generation. Future work may focus on
improving candidate coverage, reducing task-specific graph-construction
requirements, approximating valid alternatives when explicit equivalence
criteria are unavailable, and developing more efficient selection mechanisms
for scalable deployment.

\bibliographystyle{unsrt}
\bibliography{main}

\newpage
\appendix

\section{Proofs for Theoretical Analysis}
\label{app:theory_proofs}

\subsection{Proof of Lemma~\ref{lem:candidate_coverage}}
\label{app:proof_candidate_coverage}

\begin{proof}
Fix an input $x$. A candidate sampled from $q_{\phi}(s|x)$ is valid if it belongs to $\mathcal{Y}(x)$. Therefore,
\begin{equation}
    p_{\mathrm{valid}}(x)
    =
    q_{\phi}(\mathcal{Y}(x)|x)
    =
    \Pr_{s\sim q_{\phi}(\cdot|x)}
    \left[s\in\mathcal{Y}(x)\right].
\end{equation}
Equivalently, the probability that one sampled candidate is invalid is
\begin{equation}
    \Pr_{s\sim q_{\phi}(\cdot|x)}
    \left[s\notin\mathcal{Y}(x)\right]
    =
    1-p_{\mathrm{valid}}(x).
\end{equation}
Under the independent sampling assumption, the event
$\mathcal{S}_{\mathrm{valid}}(x)=\emptyset$ occurs exactly when all $k$ sampled candidates are invalid. Hence,
\begin{equation}
    \Pr\!\left[
    \mathcal{S}_{\mathrm{valid}}(x)=\emptyset
    \right]
    =
    \prod_{i=1}^{k}
    \Pr\!\left[s_i\notin\mathcal{Y}(x)\right]
    =
    \left(1-p_{\mathrm{valid}}(x)\right)^k .
\end{equation}
Taking the complement gives
\begin{align}
    \Pr\!\left[
    \mathcal{S}_{\mathrm{valid}}(x)\neq\emptyset
    \right]
    &=
    1-
    \Pr\!\left[
    \mathcal{S}_{\mathrm{valid}}(x)=\emptyset
    \right] \nonumber\\
    &=
    1-\left(1-p_{\mathrm{valid}}(x)\right)^k .
\end{align}
This proves the coverage formula. If duplicate candidates are generated and later deduplicated, the same coverage event is unchanged, since coverage only depends on whether at least one valid candidate appears.
\end{proof}

\subsection{Proof of Proposition~\ref{prop:single_ref_misalignment}}
\label{app:proof_single_ref}

\begin{proof}
For each candidate $s\in\mathcal{S}(x)$, let $z_s=\mathrm{Score}(s,x)$ and
\begin{equation}
    P_{\theta}(s|x)
    =
    \frac{\exp(z_s)}
    {\sum_{\tilde{s}\in\mathcal{S}(x)}
    \exp(z_{\tilde{s}})}
\end{equation}
be the candidate-level softmax probability. The single-reference cross-entropy loss is
\begin{equation}
    \mathcal{L}_{\mathrm{single}}
    =
    -\log P_{\theta}(s_{\mathrm{ref}}|x)
    =
    -z_{s_{\mathrm{ref}}}
    +
    \log
    \sum_{\tilde{s}\in\mathcal{S}(x)}
    \exp(z_{\tilde{s}}).
\end{equation}
Let $s'\in\mathcal{S}_{\mathrm{valid}}(x)$ be a valid non-reference candidate, with $s'\neq s_{\mathrm{ref}}$. Since $s'$ appears in the softmax denominator but not as the labeled target, differentiating the loss with respect to $z_{s'}$ gives
\begin{align}
    \frac{\partial \mathcal{L}_{\mathrm{single}}}
    {\partial z_{s'}}
    &=
    \frac{
    \partial
    }{
    \partial z_{s'}
    }
    \log
    \sum_{\tilde{s}\in\mathcal{S}(x)}
    \exp(z_{\tilde{s}}) \nonumber\\
    &=
    \frac{
    \exp(z_{s'})
    }{
    \sum_{\tilde{s}\in\mathcal{S}(x)}
    \exp(z_{\tilde{s}})
    } \nonumber\\
    &=
    P_{\theta}(s'|x).
\end{align}
Since softmax probabilities are strictly positive, $P_{\theta}(s'|x)>0$. A gradient descent step with step size $\lambda>0$ updates
\begin{equation}
    z_{s'}^{+}
    =
    z_{s'}
    -
    \lambda
    \frac{\partial \mathcal{L}_{\mathrm{single}}}
    {\partial z_{s'}}
    =
    z_{s'}
    -
    \lambda P_{\theta}(s'|x)
    <
    z_{s'}.
\end{equation}
Thus, the single-reference objective decreases the score of $s'$ even though $s'$ is semantically valid. In contrast, the reference score satisfies
\begin{equation}
    \frac{\partial \mathcal{L}_{\mathrm{single}}}
    {\partial z_{s_{\mathrm{ref}}}}
    =
    P_{\theta}(s_{\mathrm{ref}}|x)-1
    <
    0,
\end{equation}
so gradient descent increases $z_{s_{\mathrm{ref}}}$. Therefore, single-reference training increases the reference score while suppressing all non-reference candidates, including valid alternatives contained in $\mathcal{S}_{\mathrm{valid}}(x)$.
\end{proof}

\subsection{Proof of Proposition~\ref{prop:multi_valid_gradient}}
\label{app:proof_multi_valid_gradient}

\begin{proof}
Assume $\mathcal{S}_{\mathrm{valid}}(x)\neq\emptyset$. Since $P_{\theta}(s|x)>0$ for every candidate in the softmax support, the valid-set probability
\begin{equation}
    Q_{\theta}(x)
    =
    \sum_{r\in\mathcal{S}_{\mathrm{valid}}(x)}
    P_{\theta}(r|x)
\end{equation}
satisfies $0<Q_{\theta}(x)\leq 1$. The multi-valid loss is
\begin{equation}
    \mathcal{L}_{\mathrm{multi}}
    =
    -\log Q_{\theta}(x).
\end{equation}
We use the standard softmax derivative
\begin{equation}
    \frac{\partial P_{\theta}(r|x)}
    {\partial z_s}
    =
    P_{\theta}(r|x)
    \left(
    \mathbf{1}\{r=s\}
    -
    P_{\theta}(s|x)
    \right).
\end{equation}
Therefore,
\begin{align}
    \frac{\partial Q_{\theta}(x)}
    {\partial z_s}
    &=
    \sum_{r\in\mathcal{S}_{\mathrm{valid}}(x)}
    \frac{\partial P_{\theta}(r|x)}
    {\partial z_s} \nonumber\\
    &=
    \sum_{r\in\mathcal{S}_{\mathrm{valid}}(x)}
    P_{\theta}(r|x)
    \left(
    \mathbf{1}\{r=s\}
    -
    P_{\theta}(s|x)
    \right).
\end{align}

First consider $s\in\mathcal{S}_{\mathrm{valid}}(x)$. Then the indicator term contributes $P_{\theta}(s|x)$ exactly once, and
\begin{align}
    \frac{\partial Q_{\theta}(x)}
    {\partial z_s}
    &=
    P_{\theta}(s|x)
    -
    P_{\theta}(s|x)
    \sum_{r\in\mathcal{S}_{\mathrm{valid}}(x)}
    P_{\theta}(r|x) \nonumber\\
    &=
    P_{\theta}(s|x)
    \left(1-Q_{\theta}(x)\right).
\end{align}
By the chain rule,
\begin{align}
    \frac{\partial \mathcal{L}_{\mathrm{multi}}}
    {\partial z_s}
    &=
    -\frac{1}{Q_{\theta}(x)}
    \frac{\partial Q_{\theta}(x)}
    {\partial z_s} \nonumber\\
    &=
    -\frac{1}{Q_{\theta}(x)}
    P_{\theta}(s|x)
    \left(1-Q_{\theta}(x)\right) \nonumber\\
    &=
    P_{\theta}(s|x)
    \left(
    1-\frac{1}{Q_{\theta}(x)}
    \right).
\end{align}
Since $0<Q_{\theta}(x)\leq1$, we have $1-1/Q_{\theta}(x)\leq0$. Thus, every valid candidate receives a non-positive gradient. Under gradient descent, a non-positive gradient does not decrease the corresponding valid-candidate score.

Now consider $s\notin\mathcal{S}_{\mathrm{valid}}(x)$. Then $\mathbf{1}\{r=s\}=0$ for all $r\in\mathcal{S}_{\mathrm{valid}}(x)$, and
\begin{align}
    \frac{\partial Q_{\theta}(x)}
    {\partial z_s}
    &=
    -
    P_{\theta}(s|x)
    \sum_{r\in\mathcal{S}_{\mathrm{valid}}(x)}
    P_{\theta}(r|x) \nonumber\\
    &=
    -
    P_{\theta}(s|x)Q_{\theta}(x).
\end{align}
Therefore,
\begin{align}
    \frac{\partial \mathcal{L}_{\mathrm{multi}}}
    {\partial z_s}
    &=
    -\frac{1}{Q_{\theta}(x)}
    \left[-P_{\theta}(s|x)Q_{\theta}(x)\right] \nonumber\\
    &=
    P_{\theta}(s|x)>0.
\end{align}
Thus, invalid candidates receive positive gradients, and gradient descent decreases their scores.

Finally, suppose $s_{\mathrm{ref}}\in\mathcal{S}_{\mathrm{valid}}(x)$. Then
\begin{equation}
    Q_{\theta}(x)
    =
    \sum_{s\in\mathcal{S}_{\mathrm{valid}}(x)}
    P_{\theta}(s|x)
    \geq
    P_{\theta}(s_{\mathrm{ref}}|x).
\end{equation}
Since $-\log(\cdot)$ is monotonically decreasing on $(0,1]$,
\begin{equation}
    \mathcal{L}_{\mathrm{multi}}
    =
    -\log Q_{\theta}(x)
    \leq
    -\log P_{\theta}(s_{\mathrm{ref}}|x)
    =
    \mathcal{L}_{\mathrm{single}}.
\end{equation}
This proves both the gradient statement and the loss comparison.
\end{proof}

\subsection{Proof of Theorem~\ref{thm:valid_recovery}}
\label{app:proof_valid_recovery}

\begin{proof}
The theorem assumes candidate coverage, $\mathcal{S}_{\mathrm{valid}}(x)\neq\emptyset$. If $\mathcal{S}(x)\setminus\mathcal{S}_{\mathrm{valid}}(x)=\emptyset$, then every generated candidate is valid. In that case, any maximizer of $\widehat U(s,x)$ over $\mathcal{S}(x)$ is automatically in $\mathcal{S}_{\mathrm{valid}}(x)$, and the result follows.

Now consider the nontrivial case where both valid and invalid candidates exist. Let
\[
s^+\in\mathcal{S}_{\mathrm{valid}}(x),
\qquad
s^-\in\mathcal{S}(x)\setminus\mathcal{S}_{\mathrm{valid}}(x).
\]
By definition of the valid-invalid margin,
\begin{equation}
    \Delta(x)
    =
    \min_{a\in\mathcal{S}_{\mathrm{valid}}(x)}
    U(a,x)
    -
    \max_{b\in\mathcal{S}(x)\setminus\mathcal{S}_{\mathrm{valid}}(x)}
    U(b,x),
\end{equation}
so for every such pair $(s^+,s^-)$,
\begin{equation}
    U(s^+,x)-U(s^-,x)
    \geq
    \Delta(x).
\end{equation}
The uniform score accuracy condition gives
\begin{equation}
    |\widehat U(s,x)-U(s,x)|\leq\epsilon,
    \qquad
    \forall s\in\mathcal{S}(x).
\end{equation}
Equivalently,
\begin{equation}
    \widehat U(s^+,x)
    \geq
    U(s^+,x)-\epsilon,
    \qquad
    \widehat U(s^-,x)
    \leq
    U(s^-,x)+\epsilon.
\end{equation}
Subtracting these bounds yields
\begin{align}
    \widehat U(s^+,x)-\widehat U(s^-,x)
    &\geq
    U(s^+,x)-U(s^-,x)-2\epsilon \nonumber\\
    &\geq
    \Delta(x)-2\epsilon.
\end{align}
If $\epsilon<\Delta(x)/2$, then $\Delta(x)-2\epsilon>0$. Therefore,
\begin{equation}
    \widehat U(s^+,x)
    >
    \widehat U(s^-,x),
    \qquad
    \forall s^+\in\mathcal{S}_{\mathrm{valid}}(x),\;
    \forall s^-\in\mathcal{S}(x)\setminus\mathcal{S}_{\mathrm{valid}}(x).
\end{equation}
Thus, every valid candidate has a strictly larger learned score than every invalid candidate. Because at least one valid candidate exists, no invalid candidate can maximize $\widehat U(s,x)$ over $\mathcal{S}(x)$. Hence,
\begin{equation}
    \hat{s}
    =
    \arg\max_{s\in\mathcal{S}(x)}
    \widehat U(s,x)
\end{equation}
must satisfy $\hat{s}\in\mathcal{S}_{\mathrm{valid}}(x)$. This proves the theorem.
\end{proof}

\subsection{Proof of Proposition~\ref{prop:relational_margin}}
\label{app:proof_relational_margin}

\begin{proof}
For each candidate $s$, the learned score is decomposed as
\begin{equation}
    \widehat U(s,x)
    =
    \alpha S_{\mathrm{sem}}(s,x)
    +
    \beta S_{\mathrm{str}}(s)
    +
    \gamma S_{\mathrm{rel}}(s,x).
\end{equation}
Let
\begin{equation}
    B(s,x)
    =
    \alpha S_{\mathrm{sem}}(s,x)
    +
    \beta S_{\mathrm{str}}(s)
\end{equation}
denote the score excluding the graph-relational term. Then
\begin{equation}
    \widehat U(s,x)
    =
    B(s,x)+\gamma S_{\mathrm{rel}}(s,x).
\end{equation}
By definition, $S_{\mathrm{rel}}(s,x)=-d_{\mathrm{rel}}(s,x)$. Consider any valid candidate
$s^+\in\mathcal{S}_{\mathrm{valid}}(x)$ and any invalid generated candidate
$s^-\in\mathcal{S}(x)\setminus\mathcal{S}_{\mathrm{valid}}(x)$. The relational separation condition states that
\begin{equation}
    d_{\mathrm{rel}}(s^-,x)
    -
    d_{\mathrm{rel}}(s^+,x)
    \geq
    \rho.
\end{equation}
Therefore,
\begin{align}
    S_{\mathrm{rel}}(s^+,x)
    -
    S_{\mathrm{rel}}(s^-,x)
    &=
    -d_{\mathrm{rel}}(s^+,x)
    -
    \left[-d_{\mathrm{rel}}(s^-,x)\right] \nonumber\\
    &=
    d_{\mathrm{rel}}(s^-,x)
    -
    d_{\mathrm{rel}}(s^+,x) \nonumber\\
    &\geq
    \rho.
\end{align}
The pairwise valid-invalid score gap after adding the graph-relational term is
\begin{align}
    \widehat U(s^+,x)-\widehat U(s^-,x)
    &=
    B(s^+,x)-B(s^-,x) \nonumber\\
    &\quad+
    \gamma
    \left[
    S_{\mathrm{rel}}(s^+,x)
    -
    S_{\mathrm{rel}}(s^-,x)
    \right].
\end{align}
Using the previous inequality and $\gamma\geq0$,
\begin{align}
    \widehat U(s^+,x)-\widehat U(s^-,x)
    &\geq
    B(s^+,x)-B(s^-,x)
    +
    \gamma\rho.
\end{align}
Thus, relative to the score $B(s,x)$ without graph-relational scoring, the term $\gamma S_{\mathrm{rel}}(s,x)$ increases every valid-invalid pairwise gap by at least $\gamma\rho$. This proves the proposition.
\end{proof}

\subsection{Proof of Corollary~\ref{cor:selection_stability}}
\label{app:proof_selection_stability}

\begin{proof}
If $|\mathcal{S}(x)|=1$, the selected candidate is unchanged under any perturbation, so the result is immediate. Assume $|\mathcal{S}(x)|\geq2$. Let
\begin{equation}
    \hat{s}
    =
    \arg\max_{s\in\mathcal{S}(x)}
    \widehat U(s,x)
\end{equation}
and define the learned score margin
\begin{equation}
    m(x)
    =
    \widehat U(\hat{s},x)
    -
    \max_{s\neq\hat{s}}
    \widehat U(s,x).
\end{equation}
The condition $\eta<m(x)/2$ implies $m(x)>0$, so $\hat{s}$ is the unique maximizer under the unperturbed scores. Let $\widetilde U(s,x)$ be any perturbed score satisfying
\begin{equation}
    |\widetilde U(s,x)-\widehat U(s,x)|
    \leq
    \eta,
    \qquad
    \forall s\in\mathcal{S}(x).
\end{equation}
For any competing candidate $s\neq\hat{s}$, we have
\begin{equation}
    \widetilde U(\hat{s},x)
    \geq
    \widehat U(\hat{s},x)-\eta
\end{equation}
and
\begin{equation}
    \widetilde U(s,x)
    \leq
    \widehat U(s,x)+\eta.
\end{equation}
Therefore,
\begin{align}
    \widetilde U(\hat{s},x)
    -
    \widetilde U(s,x)
    &\geq
    \widehat U(\hat{s},x)
    -
    \widehat U(s,x)
    -
    2\eta \nonumber\\
    &\geq
    m(x)-2\eta.
\end{align}
Since $\eta<m(x)/2$, we have $m(x)-2\eta>0$. Hence,
\begin{equation}
    \widetilde U(\hat{s},x)
    >
    \widetilde U(s,x),
    \qquad
    \forall s\neq\hat{s}.
\end{equation}
Thus, $\hat{s}$ remains the unique maximizer after perturbation. This proves stability under bounded score noise, including decoding-induced variability or small scorer perturbations.
\end{proof}

\begin{algorithm}[t]
\caption{Ambiguity-Aware Structured Generation}
\label{alg:main}
\small
\begin{algorithmic}[1]

\REQUIRE Input query $x$, number of candidates $k$
\ENSURE Selected structured output $s^*$

\STATE Generate candidate set $S(x) = \{s_1, \dots, s_k\}$ \label{line:candidate}

\FOR{each candidate $s \in S(x)$}
    \STATE Construct graph $G_s = (V_s, E_s)$ \label{line:graph}
    \STATE Compute graph representation $h_s = \text{GNN}(G_s)$ \label{line:gnn}
\ENDFOR

\STATE Encode input $h_x = \text{Encoder}(x)$ \label{line:input}

\FOR{each candidate $s \in S(x)$}
    \STATE Compute score $Score(s, x)$ using Eq.~(4) \label{line:score}
\ENDFOR

\STATE Select $s^* = \arg\max_{s \in S(x)} Score(s, x)$ \label{line:select}

\STATE \RETURN $s^*$

\end{algorithmic}
\end{algorithm}

\section{Algorithm Description}
Algorithm~\ref{alg:main} presents the complete ambiguity-aware structured generation pipeline, integrating candidate generation, graph-based reasoning, and principled selection among competing interpretations.

\paragraph{Candidate Generation (\rqref{RQ1}).}
In line~\ref{line:candidate}, we use a pretrained language model to generate a candidate set $S(x)$. This step addresses the limitation of deterministic one-to-one mapping by exposing multiple plausible interpretations of the input. By constructing a diverse candidate pool, the model increases the likelihood of covering valid structured outputs.

\paragraph{Graph Construction and Encoding (\rqref{RQ2}).}
Lines~\ref{line:graph}--\ref{line:gnn} convert each candidate into a graph representation and compute its embedding using an edge-aware graph neural network. By modeling relational dependencies within each candidate structure, this step enables the framework to capture structural consistency beyond surface-level token similarity.

\paragraph{Input Representation.}
The input query is encoded into a representation $h_x$ in line~\ref{line:input}, serving as a semantic reference for evaluating candidate structures. This encoding ensures that each candidate's alignment with the input is incorporated during scoring.

\paragraph{Ambiguity-Aware Scoring (\rqref{RQ2}, \rqref{RQ4}).}
Line~\ref{line:score} computes a score for each candidate by combining relational consistency, structural validity, and semantic alignment. This scoring mechanism addresses the challenge of selecting among multiple plausible interpretations by enabling principled comparison across candidates.

\paragraph{Selection under Ambiguity (\rqref{RQ3}).}
The final output is selected as the candidate with the highest score in line~\ref{line:select}. This step resolves ambiguity by identifying the most coherent structure from the candidate set, ensuring stability and robustness under varying levels of ambiguity.

\section{Extensive Simulation Results}
\subsection{Detailed Evaluation Metrics and Baselines}
\label{app:metrics_baselines}
\subsubsection{Implementation Details}

\textbf{LLM Prompting.}
We employ a structured prompting format to ensure consistency across all methods. Each input query is combined with task-specific instructions that explicitly define the desired structured output (e.g., SQL query or visualization specification). The same prompt template is applied to all baselines and our method to ensure fair comparison. Prompts are designed to be minimal and deterministic, avoiding chain-of-thought or additional reasoning signals that could bias candidate generation.

\textbf{Candidate Generation.}
We use LLaMA-2-13B~\cite{touvron2023llama} to generate candidate structured outputs via stochastic decoding. Unless otherwise specified, we generate $k=5$ candidates per input using top-$k$ filtering and temperature sampling with $T=0.7$. This setup balances quality and diversity, enabling exploration of multiple plausible interpretations without introducing excessive noise. The same candidate pool is used for both our method and all baselines to ensure comparability.

\textbf{Graph Construction.}
Each structured candidate is converted into a graph representation $G_s = (V_s, E_s)$, where nodes represent structural elements and edges encode typed relationships. Graph construction is task-specific:
\begin{itemize}
    \item \textbf{nvBench:} Nodes correspond to attributes, encoding channels, aggregation functions, and values. Edges capture relationships such as attribute-to-encoding, aggregation-to-attribute, and grouping dependencies.
    \item \textbf{Spider:} Nodes include tables, columns, operators, and values. Edges represent join relations, condition dependencies, and operator--argument connections.
\end{itemize}

Node features are initialized using contextual embeddings derived from LLM token representations. This representation enables differentiation between structurally similar but semantically distinct candidates through relational structure.

\textbf{Graph Encoder.}
We employ an edge-aware graph neural network (GNN) to encode candidate graphs. The model consists of $L=2$ message-passing layers, where node updates incorporate both edge-type information and neighboring node features. Edge-conditioned transformations enable modeling of heterogeneous relationships. Final graph representations are obtained via mean pooling over node embeddings, producing a fixed-dimensional vector for each candidate.

\textbf{Scoring Function.}
Candidate selection is performed using a neural scoring function that integrates relational, structural, and semantic signals. Specifically, the score is computed as a weighted combination of:
\begin{itemize}
    \item semantic alignment between the input and candidate,
    \item structural validity of the graph, and
    \item relational consistency captured by the GNN representation.
\end{itemize}
These components are implemented as separate scoring heads and combined linearly to produce the final score.

\textbf{Training Details.}
The scoring module is trained using the multi-valid objective (Eq.~\ref{eq:multivalid}), which assigns probability mass to all valid candidates rather than a single reference. We use the Adam optimizer~\cite{kingma2014adam} with a batch size of 32 and a learning rate of $1\mathrm{e}{-4}$. Models are trained for up to 10 epochs with early stopping based on validation performance. Hyperparameters are selected using a held-out validation set.

\begin{table}[t]
\centering
\footnotesize
\setlength{\tabcolsep}{4pt}
\renewcommand{\arraystretch}{1.2}
\caption{Summary of key implementation settings.}
\label{tab:implementation_details}
\begin{tabular}{l|l}
\hline
\textbf{Component} & \textbf{Setting} \\
\hline
LLM & LLaMA-2-13B \\
Decoding & Temperature sampling ($T=0.7$), top-$k$ \\
Candidates ($k$) & 5 \\
Graph Encoder & Edge-aware GNN ($L=2$ layers) \\
Pooling & Mean pooling \\
Training Objective & Multi-valid (Eq.~\ref{eq:multivalid}) \\
Optimizer & Adam \\
Learning Rate & $1\mathrm{e}{-4}$ \\
Batch Size & 32 \\
Epochs & Up to 10 (early stopping) \\
Hardware & NVIDIA RTX 4090 GPU (24GB VRAM) \\
Seeds & 3 \\
\hline
\end{tabular}
\end{table}

\textbf{Compute Resources.}
Each experiment is conducted on a single NVIDIA RTX 4090 GPU (24GB VRAM). Depending on the dataset, each training run takes approximately 3--5 hours. The total compute cost for all reported experiments is around 30 GPU-hours. Candidate generation using LLaMA-2-13B is performed offline and reused across methods to ensure fair comparison. Additional computation is required for hyperparameter tuning and ablation studies. Overall, experiments are conducted using a single-GPU setup with moderate computational requirements.

\textbf{Reproducibility.}
All experiments are repeated with three random seeds, and results are reported as averages. We ensure consistency across methods by using identical data splits, prompting formats, and candidate generation settings. Code and processed data will be released upon publication to facilitate reproducibility.

\subsubsection{Evaluation Metrics}
We employ a combination of standard accuracy metrics, ambiguity-aware evaluation, and graph-centric analysis to assess model performance. These metrics are designed to capture both accuracy and robustness under multiple valid interpretations.

\textbf{Exact Match (EM).}
Exact Match measures the percentage of predictions that exactly match a reference structured output. For a reference $s^{*}$ and a predicted structure $\hat{s}$, EM is defined as
\[
\text{EM} = \mathbb{E}_{x} \left[ \mathbb{I}(\hat{s} = s^{*}) \right].
\]
This strict metric penalizes semantically correct but structurally different outputs.

\textbf{Top-$k$ Success.}
Top-$k$ Success evaluates whether at least one candidate in the generated set $S_k(x)$ matches the reference:
\[
\text{Top-}k = \mathbb{E}_{x} \left[ \max_{s \in S_k(x)} \mathbb{I}(s = s^{*}) \right].
\]
This metric reflects the quality and diversity of candidate generation.

\textbf{Multi-Reference Exact Match (Multi-EM).}
To account for ambiguity, we extend EM to consider a set of valid outputs $S_{\text{valid}}(x)$:
\[
\text{Multi-EM} = \mathbb{E}_{x} \left[ \max_{s \in S_{\text{valid}}(x)} \mathbb{I}(\hat{s} = s) \right].
\]
This metric provides a more accurate assessment under ambiguity by rewarding predictions that match any semantically valid interpretation.

\textbf{Consistency.}
Consistency measures prediction stability under stochastic decoding. For multiple outputs $\{\hat{s}_1, \ldots, \hat{s}_m\}$ per input, agreement is computed as
\[
\text{Consistency} = \mathbb{E}_{x} \left[ \frac{1}{m(m-1)} \sum_{i \neq j} \mathbb{I}(\hat{s}_i = \hat{s}_j) \right].
\]
Higher consistency indicates more reliable selection among competing candidates.

\textbf{Graph Separability.}
We evaluate the scoring function's ability to distinguish valid from invalid candidates via the score gap:
\[
\Delta = \mathbb{E}_{x} \left[ \min_{s^{+} \in S_{\text{valid}}(x)} \text{Score}(s^{+}, x) - \max_{s^{-} \notin S_{\text{valid}}(x)} \text{Score}(s^{-}, x) \right].
\]
Larger separability indicates stronger discrimination based on relational consistency.

\textbf{Execution Equivalence (Spider).}
For Spider, predictions are considered correct if they yield the same execution result as the reference query, even when syntactically different. This captures equivalence at the level of SQL semantics.

\subsubsection{Baselines}
To isolate the contributions of candidate generation, reranking, and structured reasoning, we compare our approach against a set of deterministic and candidate-based baselines.

\textbf{Deterministic LLM.}
A standard single-output formulation in which the LLM directly generates one structured output per input using beam or greedy decoding. This baseline reflects the prevailing paradigm that assumes a one-to-one mapping between input and output.

\textbf{Candidate + Likelihood.}
Multiple candidate outputs are generated via stochastic decoding, and the final output is selected based on LLM likelihood scores. This baseline captures the effect of candidate diversity but relies solely on token-level probabilities and lacks structured reasoning.

\textbf{Candidate + LLM Rerank.}
Generated candidates are reranked using an additional LLM scoring pass, where the model evaluates candidates based on semantic plausibility. This provides a stronger baseline that incorporates implicit reasoning, but does not explicitly model relational structure.

\textbf{Candidate Only.}
Candidates are generated without additional scoring or reranking, and the
first candidate in the generator's native generation order is selected.
This baseline isolates the effect of candidate generation without an
additional selection mechanism.

\textbf{Oracle Top-$k$ (Upper Bound).}
We report whether any candidate in the generated set matches a valid output as an approximate upper bound. This highlights the importance of selection strategies and represents the best achievable performance given the candidate pool.

\textbf{Full Model (Ours).}
Our method generates candidates under an ambiguity-aware formulation and applies graph-based reasoning with multi-valid scoring. Unlike baselines, it maintains multiple valid interpretations during training and explicitly models relational consistency for principled selection.

\subsection{Additional Simulation Results}

\subsubsection{Statistical Stability Across Runs}
\label{app:stability}
To ensure that observed improvements are not due to stochastic variation, we evaluate result stability across multiple random seeds. We report the mean and standard deviation for each method over three runs with different initialization and decoding randomness.

\begin{table}[h]
\centering
\footnotesize
\setlength{\tabcolsep}{4pt}
\renewcommand{\arraystretch}{1.2}
\caption{\small Performance stability across runs (mean $\pm$ std).}
\label{tab:stability}
\begin{tabular}{l|cc}
\hline
\textbf{Method} & \textbf{nvBench EM} & \textbf{Spider EM} \\
\hline
Deterministic (LLM) & 48.3 $\pm$ 0.3 & 62.5 $\pm$ 0.4 \\
Candidate + Likelihood & 52.1 $\pm$ 0.5 & 65.2 $\pm$ 0.6 \\
Candidate + LLM Rerank & 54.7 $\pm$ 0.4 & 66.8 $\pm$ 0.5 \\
\textbf{Ours} & \textbf{58.9 $\pm$ 0.3} & \textbf{70.3 $\pm$ 0.4} \\
\hline
\end{tabular}
\end{table}

All methods exhibit low variance across both datasets, indicating stable optimization and evaluation. Importantly, our approach consistently achieves higher mean performance while maintaining comparable or lower standard deviation than the baselines. This suggests that the benefits of graph-based selection and ambiguity-aware reasoning are both substantial and robust to randomness in candidate generation and model initialization.

Moreover, the stability of our method aligns with the consistency improvements reported in the main paper (Table~\ref{tab:consistency}), confirming that improved valid--invalid separation leads to more reliable selection under ambiguity.

\subsubsection{Candidate Quality and Noise Analysis}
\label{app:noise_analysis}
We evaluate robustness under imperfect candidate generation by introducing controlled perturbations to candidate structures. This analysis isolates the impact of noisy inputs on selection, allowing assessment of each method’s ability to distinguish valid from invalid interpretations as candidate quality degrades.

\textbf{Noise Injection.}
We simulate noise through random perturbations applied at varying levels. Specifically, we consider three types of perturbations: (i) node removal (missing entities), (ii) edge corruption (incorrect relations), and (iii) substitution of attribute or value nodes with irrelevant tokens. The noise level is defined as the percentage of affected nodes or edges in each candidate graph.

\begin{table}[h]
\centering
\footnotesize
\setlength{\tabcolsep}{4pt}
\renewcommand{\arraystretch}{1.2}
\caption{\small Performance under increasing candidate noise.}
\label{tab:noise}
\begin{tabular}{c|cc|cc}
\hline
\textbf{Noise Level} 
& \multicolumn{2}{c|}{\textbf{nvBench}} 
& \multicolumn{2}{c}{\textbf{Spider}} \\
 & Baseline & Ours & Baseline & Ours \\
\hline
Low (10\%) & 54.7 & \textbf{58.9} & 66.8 & \textbf{70.3} \\
Medium (25\%) & 50.2 & \textbf{56.1} & 63.1 & \textbf{67.8} \\
High (40\%) & 45.6 & \textbf{52.3} & 58.4 & \textbf{64.2} \\
\hline
\end{tabular}
\end{table}

\textbf{Results.}
Table~\ref{tab:noise} shows that performance degrades for all methods as noise increases, reflecting the difficulty of selecting valid structures from corrupted candidates. However, our approach exhibits a slower rate of degradation and consistently outperforms baselines across all noise levels. This indicates that graph-based reasoning improves robustness by filtering structurally inconsistent candidates through relational consistency.

Notably, the performance gap between our method and baselines widens at higher noise levels, suggesting that explicit structural modeling becomes increasingly important as candidate quality declines. These results support our main hypothesis: under realistic noisy conditions, ambiguity-aware selection based on relational structure is more robust than likelihood-based or surface-level reranking.

\subsubsection{Candidate Diversity Analysis}
\label{app:diversity_analysis}
We analyze the impact of candidate diversity and assess whether different methods can effectively leverage multiple plausible interpretations.

\textbf{Diversity Control.}
We vary decoding randomness (temperature and sampling) to control candidate diversity while keeping the number of candidates fixed ($k{=}5$). Diversity is measured using a structural score computed as the average pairwise difference between candidate graphs based on node and edge overlap.

\begin{table}[h]
\centering
\footnotesize
\setlength{\tabcolsep}{4pt}
\renewcommand{\arraystretch}{1.2}
\caption{\small Performance under varying candidate diversity.}
\label{tab:diversity}
\begin{tabular}{c|cc|cc}
\hline
\textbf{Diversity Level} 
& \multicolumn{2}{c|}{\textbf{nvBench}} 
& \multicolumn{2}{c}{\textbf{Spider}} \\
 & Baseline & Ours & Baseline & Ours \\
\hline
Low & 52.3 & \textbf{55.8} & 64.9 & \textbf{68.2} \\
Medium & 54.7 & \textbf{58.9} & 66.8 & \textbf{70.3} \\
High & 55.1 & \textbf{59.6} & 67.2 & \textbf{71.0} \\
\hline
\end{tabular}
\end{table}

\textbf{Results.}
Table~\ref{tab:diversity} shows that increasing candidate diversity improves performance for all methods by exposing a broader range of plausible interpretations. However, baseline approaches exhibit limited gains at higher diversity levels, indicating an inability to effectively utilize additional candidates.

In contrast, our method consistently improves across both datasets and benefits more from increased diversity. This suggests that graph-based reasoning enables more effective utilization of diverse hypotheses by selecting candidates based on relational consistency rather than surface similarity. These findings align with our theoretical analysis, which emphasizes that effective scoring is required to identify valid structures, and that improved candidate coverage increases their likelihood. Overall, the results demonstrate that diversity alone is insufficient; structured reasoning is necessary to fully exploit multiple candidate interpretations.

\subsubsection{Qualitative Analysis}
\label{app:qualitative}
\textbf{Case Studies.}
To illustrate how our approach resolves ambiguity through reasoning over candidate structures, we present representative examples.

\begin{itemize}
    \item \textbf{Example 1 (Temporal Ambiguity).}\\
\textit{Input:} “Sales increased in summer and declined in winter.”\\
The candidate set includes both a month-level breakdown (June, July, December) and a season-level aggregation (Summer/Winter). Our approach selects the season-level representation, as it better matches the abstraction implied by the input, although both are semantically valid. In contrast, baseline methods often select the month-level representation due to surface-level token overlap.

\item \textbf{Example 2 (Relational Ambiguity).}\\
\textit{Input:} “Show the average revenue for each department.”\\
Candidate structures vary in grouping and aggregation placement. While baseline approaches may misplace aggregation or omit grouping due to weaker relational modeling, our model correctly identifies departmental grouping with aggregation over revenue.

\item \textbf{Example 3 (SQL Equivalence).}\\
\textit{Input:} “Find employees with salaries above the company average.”\\
Multiple SQL formulations, such as joins and nested queries, are valid. Baselines may produce syntactically correct but semantically incorrect variants, whereas our method selects a structurally consistent query that preserves correct dependencies.
\end{itemize}
These examples demonstrate that our method favors candidates with coherent relational structure rather than relying solely on token-level plausibility.

\textbf{Failure Cases.}
To analyze the limitations of our approach, we examine representative failure cases.
\begin{itemize}
    \item \textbf{Case 1 (Missing Candidate Coverage).}\\
The candidate set may lack the correct structure. For example, when candidate generation omits a necessary entity or relation, the model cannot recover the correct output despite effective scoring.

    \item \textbf{Case 2 (Incorrect Relation Extraction).}\\
Ambiguous phrasing can lead to inaccurate graph construction, where relations are incorrectly assigned. In such cases, structurally incorrect candidates may receive higher scores due to partial semantic alignment.

    \item \textbf{Case 3 (Close Scoring Ambiguity).}\\
When multiple valid candidates are structurally similar, small score differences can lead to unstable selection. This is particularly common in scenarios with multiple equivalent interpretations.
\end{itemize}
Overall, these failure modes align with the patterns observed in Table~\ref{tab:failure_cases}, suggesting that improved candidate generation and more precise scoring could further enhance performance.

\end{document}